\documentclass[lettersize,journal]{IEEEtran}
\usepackage{amsmath,amsfonts}
\usepackage{algorithmic}
\usepackage{algorithm}
\usepackage{array}
\usepackage[caption=false,font=normalsize,labelfont=sf,textfont=sf]{subfig}
\usepackage{textcomp}
\usepackage{stfloats}
\usepackage{url}
\usepackage{verbatim}
\usepackage{graphicx}
\usepackage{cite}
\usepackage{booktabs}   
\usepackage{diagbox}
\usepackage{multirow}
\usepackage{amssymb}
\usepackage{tabularx}
\usepackage{makecell}
\usepackage{amsthm}
\newtheorem{theorem}{Theorem}

\newtheorem{lemma}{Lemma}
\newtheoremstyle{noDot}
  {3pt}      
  {3pt}      
  {\itshape} 
  {}         
  {\bfseries}
  {}         
  { }        
  {}         
\theoremstyle{noDot}
\newtheorem{remark}{Remark}
\newtheorem{assumption}{Assumption}
\usepackage{mathtools}
\usepackage{cite}
\usepackage{enumitem}
\usepackage[colorlinks=true, linkcolor=blue, citecolor=blue, urlcolor=blue]{hyperref}
\usepackage{ragged2e}
\usepackage{caption}
\begin{document}

\title{Safety-Regulated Transfer Reinforcement Learning with Adaptive Teacher Guidance}
\author{{Wenjie Huang, Yang Li, \textit{Member, IEEE}, Jingjia Teng, Mingwei Jin, Kai Song, \\
Zeyu Yang, Qisong Yang, Yougang Bian, \textit{Member, IEEE}}
\vspace{-10mm}
\thanks{Wenjie Huang, Yang Li, Jingjia Teng, Mingwei Jin, Kai Song, Zeyu Yang, Yougang Bian are with the State Key Laboratory of Advanced Design and Manufacturing Technology for Vehicle, College of Mechanical and Vehicle Engineering, Hunan University, Changsha 410082, China
.}
\thanks{Qisong Yang is with the Xi'an Institute of High-Tech, Xi'an 710025, China. 
}
}

\markboth{IEEE Internet of Things Journal}%
{Shell \MakeLowercase{\textit{et al.}}: A Sample Article Using IEEEtran.cls for IEEE Journals}
\maketitle

\begin{abstract}
Transfer reinforcement learning can improve policy learning efficiency by reusing knowledge from source tasks.
However, existing transfer reinforcement learning methods may suffer from negative transfer and unstable convergence due to source--target distribution mismatch, while the target-domain interactions required for policy adaptation can introduce substantial safety risks. To address these issues, we propose \underline{S}afety-\underline{R}egulated \underline{A}daptive \underline{T}ransfer \underline{R}einforcement \underline{L}earning (SRATRL), a teacher--student framework that combines safety-triggered intervention, safety-adaptive value shaping, and policy-compatibility-based optimization for efficient target-domain adaptation. First, a safety-triggered closed-loop intervention strategy is developed that activates teacher guidance according to the instantaneous safety cost and adaptively adjusts the intervention threshold based on the student policy's recent safety performance, thereby providing timely safety supervision while progressively restoring student autonomy as its safety improves. Next, a safety-adaptive teacher-guided value-shaping scheme is introduced, in which a teacher-consistency signal is incorporated into the critic target, and its contribution is dynamically regulated by the safety-constraint multiplier, enabling stronger teacher guidance under elevated safety risks and gradually weakening such guidance as the safety constraint is better satisfied. In addition, a teacher-student policy-compatibility weighting approach is proposed to alleviate the adverse optimization effects caused by policy mismatch. It reweights teacher-intervened transitions according to the relative likelihood of the executed action under the teacher and student policies, thereby improving policy-update stability. Moreover, return-deviation bounds are derived for the resulting mixed behavior policy to characterize the effect of teacher intervention. Experimental results demonstrate that compared with a Proximal Policy Optimization with Lagrangian constraint baseline, the proposed method improves the average velocity by 6.90\%, and reduces the crash ratio by 75.00\%. These results demonstrate that the proposed method can reduce safety costs while maintaining competitive task efficiency.
\end{abstract}

\begin{IEEEkeywords}
Autonomous lane changing, safe transfer reinforcement learning, adaptive teacher guidance.
\end{IEEEkeywords}

\vspace{-5mm}
\section{Introduction}
\IEEEPARstart{S}{afe} and efficient lane-change decision-making is essential for autonomous driving systems in complex traffic environments, and reinforcement learning (RL) has shown considerable potential for learning adaptive lane-changing policies through interaction with dynamic traffic scenarios \cite{Kiran2022DRLSurvey}.
However, learning a lane-changing policy from scratch typically requires extensive environment interactions and exploration, resulting in high training costs and considerable safety risks \cite{Krasowski2020SafeLaneChange}. To address these issues, transfer reinforcement learning leverages prior knowledge from source tasks to reduce the training burden and improve convergence efficiency \cite{Ma2024TITS,Shu2022TVT}. Nevertheless, existing transfer reinforcement learning methods still suffer from several limitations. First, distribution discrepancies between source and target domains often lead to knowledge mismatch, which may further result in poor data utilization, negative transfer, and degraded convergence performance \cite{Zhang2024PerspectiveO2O}. Second, although transfer mechanisms can improve learning efficiency, target-domain training still generally requires exploration, making it difficult to reduce unsafe exploratory interactions and limiting the deployment of such methods in safety-critical scenarios \cite{Hsu2023SimToLabToReal}. Therefore, existing transfer reinforcement learning methods for autonomous driving still face challenges in achieving a favorable trade-off between training safety and target-domain policy performance.

\begin{figure}[t]
    \centering
    \includegraphics[width=1\linewidth]{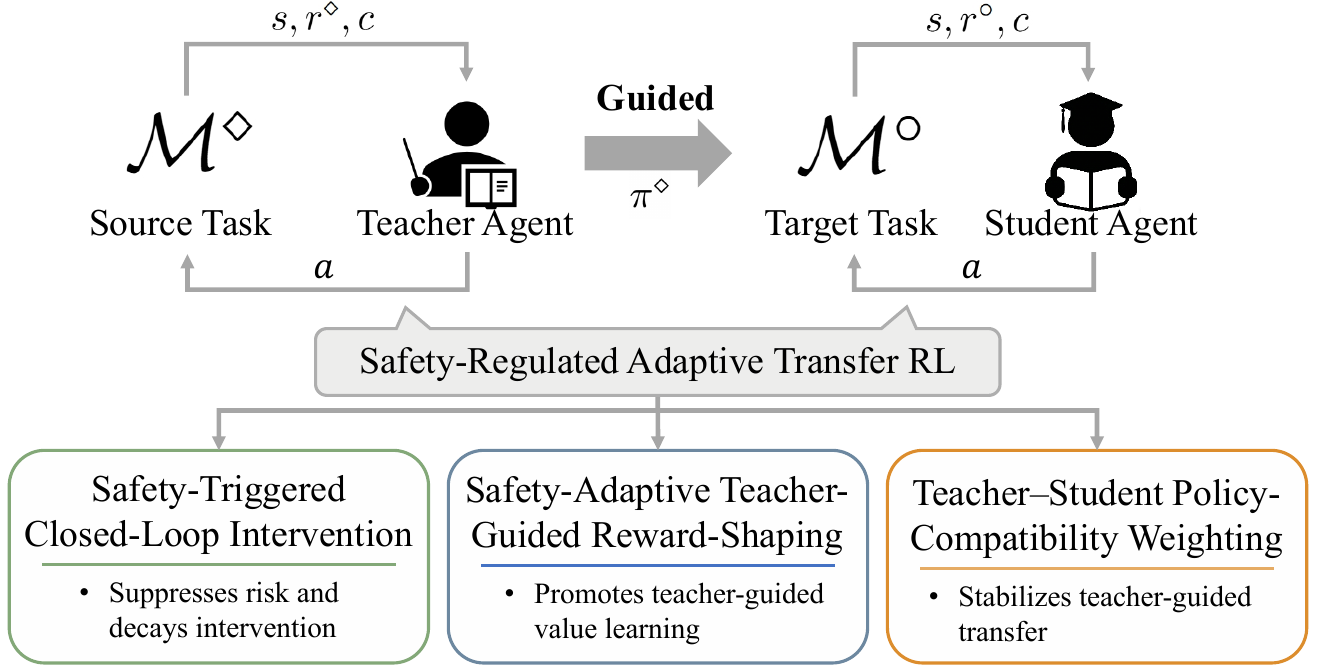}
    \caption{Illustration of the key components of the proposed method for transferring knowledge acquired from the source task ($\diamond$, left) to the target task ($\circ$, right). The proposed method is designed to accelerate learning and improve safety performance.}
    \label{fig-trasnfer-learning}
    \vspace{-5mm}
\end{figure}

To address the aforementioned challenges, existing transfer reinforcement learning methods can be broadly classified into three categories: safety-oriented transfer methods \cite{Zhou2025S2CD, Xu2018ZeroShotTransfer, Zhang2024SafetyControlTL}, efficiency-oriented transfer methods \cite{Liang2019FedTransferRL, Lu2024SceTL}, and domain adaptation-based transfer methods \cite{Moni2025SSDT, You2022CAT}.
Safety-oriented transfer methods focus on improving training safety during transfer, typically by reducing hazardous exploration in target-domain training through mechanisms such as teacher guidance \cite{Zhou2025S2CD}, robust control \cite{Xu2018ZeroShotTransfer}, and safety constraints \cite{Zhang2024SafetyControlTL}. 
However, unsafe behaviors may still arise during early adaptation to a new environment, and teacher guidance is often used mainly at the action-selection level, without fully transferring teacher-side safety preferences into the student’s value learning.
Efficiency-oriented transfer methods focus on improving transfer efficiency by reusing prior knowledge from source tasks to reduce the training burden and accelerate policy convergence \cite{Liang2019FedTransferRL, Lu2024SceTL}.
However, such knowledge is often used as a general prior, with limited consideration of its safety relevance to the current state and its compatibility with the evolving target policy. This issue becomes critical in teacher-guided safe transfer, where the teacher policy should be selectively used and properly weighted during target-task learning.
Domain adaptation-based transfer methods focus on modeling source-target discrepancies, typically through domain transfer \cite{Moni2025SSDT}, correspondence learning, or representation transfer to improve cross-domain transfer performance \cite{You2022CAT}. 
However, they are not specifically designed to address unsafe exploration or the safety-aware utilization of teacher policies during target-domain learning. 
Therefore, it remains imperative to develop a transfer mechanism that improves the convergence efficiency of policy learning while ensuring safety, so as to enable fast and safe reinforcement learning-based decision-making.

To address the above challenges, we propose Safety-Regulated Adaptive Transfer Reinforcement Learning (SRATRL) for autonomous highway lane changing, as illustrated in Fig. \ref{fig-trasnfer-learning}. SRATRL jointly regulates teacher influence at the action, value, and sample levels within a unified safety-regulated framework. Specifically, teacher involvement is adaptively controlled during interaction, teacher preferences are incorporated into value learning according to the current safety constraint pressure, and teacher-intervened samples are selectively weighted according to their relative compatibility with the evolving student policy. The main contributions are summarized as follows:

\begin{itemize}
    \item We design a safety-triggered closed-loop intervention strategy, which reduces high-cost exploration early in training and gradually relaxes teacher control as the student improves. The resulting mixed behavior policy is further analyzed through a return-deviation bound with respect to the teacher policy, providing theoretical insight into the effect of teacher intervention.
    \item We propose a safety-adaptive teacher-guided reward-shaping scheme that incorporates the teacher policy as a source-domain behavioral prior into student value learning through a teacher-induced shaping term. The influence of this term is dynamically modulated by the safety-related Lagrange multiplier, so that teacher guidance is strengthened when the student policy is unsafe and weakened as safety improves.
    \item We introduce a policy-compatibility weighting approach to adjust the contribution of teacher-intervened samples during optimization. By reweighting teacher-guided samples according to teacher-student action-likelihood compatibility, the proposed strategy suppresses the influence of mismatched samples.
    \item Comparative experiments conducted under different traffic densities, together with an additional NGSIM-based evaluation, demonstrate that the proposed method achieves a favorable balance between traffic efficiency and safety under the evaluated scenarios.
\end{itemize}

The remainder of this paper is organized as follows: Section \ref{section2} reviews related work. Section \ref{section3} describes the problem setting and formulation. Sections \ref{section4} and \ref{section6} present the proposed methodology and the theoretical analysis. Sections \ref{section7} and \ref{section8} provide the experimental setup and results. Section \ref{section9} concludes the whole paper.

\section{Related Work}
\label{section2}

\begin{figure*}[t]
    \centering
    \includegraphics[width=1\linewidth]{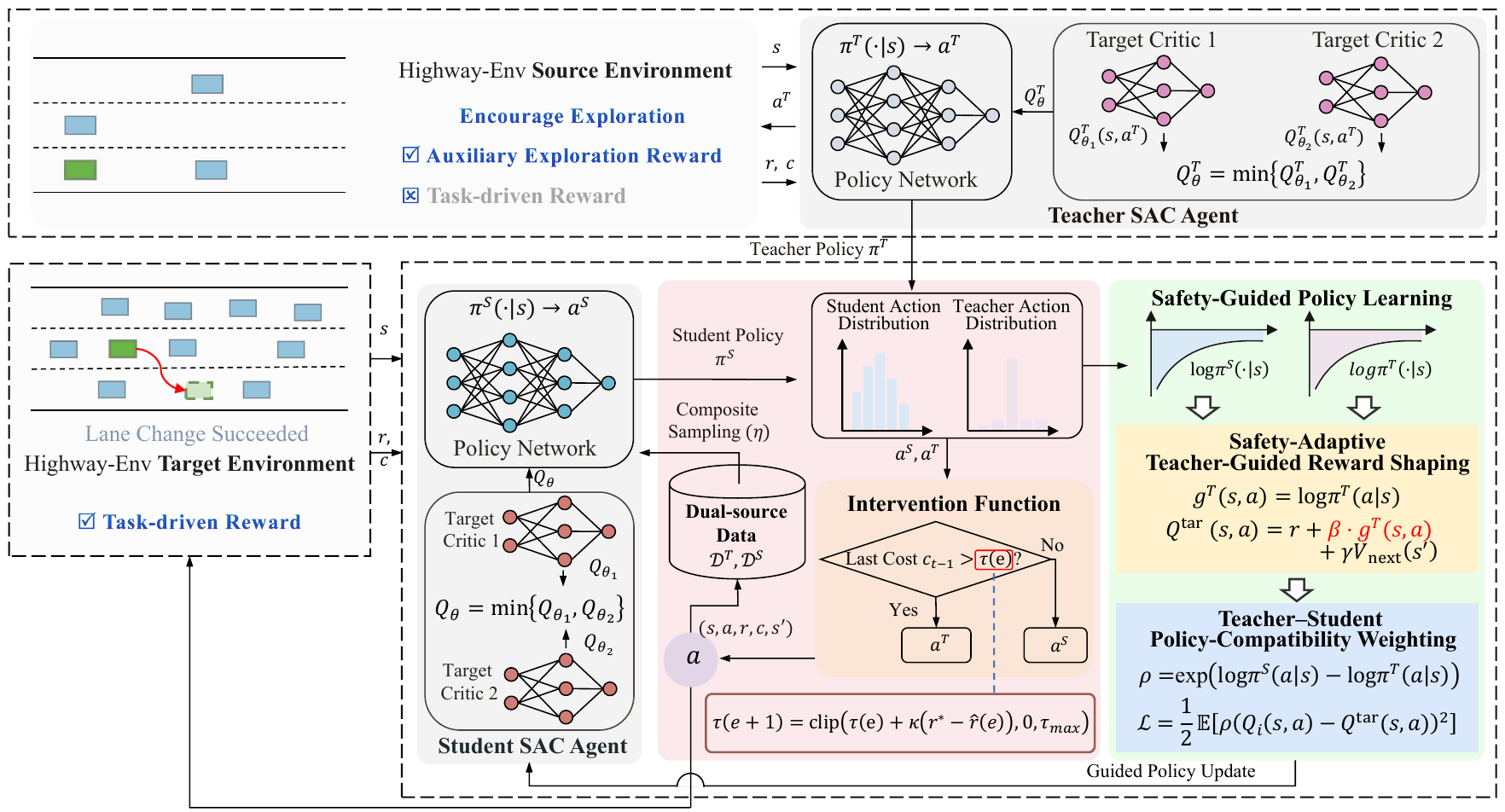}
    \caption{Framework of the proposed SRATRL method. A teacher SAC agent trained in a source-domain highway environment provides safety-oriented guidance for student adaptation in a target-domain environment through action intervention, teacher-guided reward shaping, policy-compatibility weighting, and dual-source replay learning. }
    \label{fig-framework}
    \vspace{-5mm}
\end{figure*}

\subsection{Transfer Reinforcement Learning for Autonomous Driving}
Transfer reinforcement learning improves adaptation to new tasks by reusing knowledge acquired from related source tasks. Existing approaches mainly transfer expert demonstrations, pretrained policies, model parameters, representations, or previously collected experience. Demonstration-based methods incorporate expert trajectories into policy optimization to accelerate early exploration \cite{Vecerik2017LeveragingDemonstrations,Kang2018POfD}, whereas policy transfer and distillation methods reuse pretrained behavioral or representational knowledge across related tasks \cite{Parisotto2016ActorMimic,Yin2017HierarchicalExperienceReplay}. Experience-based methods further improve learning efficiency by exploiting demonstrations or successful trajectories collected from related tasks \cite{Guo2023CRSfD,Filos2021SuccessorDemo}. In autonomous driving, transfer has also been explored across simulation and deployment domains, vehicle-dynamics variations, and different driving scenarios \cite{Mueller2018DrivingPolicyTransfer,Xu2018ZeroShotDrivingTransfer}. Online teacher-assisted transfer further allows a source policy to guide target-domain exploration through selective action advice \cite{Campbell2023IntrospectiveActionAdvising}.
However, existing studies mainly emphasize sample efficiency and faster convergence, while transferred guidance is often reused through fixed objectives or uniform weighting. In lane-changing tasks, such unregulated transfer may impose unsuitable source-domain preferences as the student’s safety condition evolves. Therefore, the influence of source knowledge should be adaptively regulated during target-domain learning.

Different from conventional transfer approaches, our approach incorporates teacher-side action evaluation into student value learning through safety-adaptive reward shaping, allowing transferred knowledge to be regulated according to the student’s safety condition rather than being reused uniformly.

\subsection{Safe Reinforcement Learning for Autonomous Lane-Changing}
Safe reinforcement learning aims to maximize performance while limiting unsafe interactions during training and execution. A common formulation is the constrained Markov decision process, where the expected safety cost must satisfy a predefined constraint \cite{Achiam2017CPO}. Lagrangian methods incorporate such constraints into policy optimization through an adaptive multiplier \cite{stooke2020responsive}, while risk-sensitive methods account for uncertainty, tail risk, or worst-case outcomes in the optimization objective \cite{Chow2015CVaR}. Another group of methods directly modifies unsafe actions. Safety shielding and action-filtering methods prevent or correct actions that violate predefined safety conditions \cite{Alshiekh2018Shielding,Dalal2018SafeExploration}, whereas backup controllers and intervention mechanisms replace hazardous actions with safer alternatives \cite{Thananjeyan2021RecoveryRL,Huang2024HAIMDRL}.
Although constrained optimization can reduce long-term safety costs, it does not necessarily prevent individual hazardous actions during early exploration, especially before reliable safety estimates have been learned \cite{stooke2020responsive}. Safety filters and external interventions can provide immediate protection, but fixed or overly conservative rules may restrict exploration and reduce policy autonomy. Frequent intervention may also cause the learned policy to depend excessively on the external controller.

In contrast, we introduce a safety-triggered closed-loop intervention strategy that suppresses unsafe exploratory interactions during early training and progressively decreases teacher involvement as the student develops safer target-domain behaviors.

\subsection{Teacher--Student Policy Guidance and Knowledge Transfer}
Teacher--student reinforcement learning transfers knowledge through policy distillation, imitation learning, action advising, teacher intervention, reward shaping, or the joint use of teacher-related experience \cite{Gupta2019AdviceReplay}. Policy distillation transfers behavioral or representational knowledge from a pretrained policy to a student policy \cite{Parisotto2016ActorMimic,Yin2017HierarchicalExperienceReplay}, while online advising methods allow the teacher to provide actions selectively during student interaction \cite{Wang2017ConfidenceBasedDemonstrations,Anand2021StateCategorization}. Some studies further combine teacher advice with imitation learning so that the student can reproduce teacher-supported behavior after guidance is removed \cite{Ilhan2021AdviceImitation}.
However, guidance from a source-domain teacher is not always beneficial \cite{Campbell2023IntrospectiveActionAdvising}. The teacher may provide actions that are safe but inefficient, or actions that are unsuitable for target-domain traffic conditions \cite{yang2021efficient}. Strong imitation may prevent the student from adapting beyond the teacher, whereas weak guidance may have little effect on learning. In addition, teacher-intervened samples are generated by a behavior policy different from the current student policy, creating an off-policy discrepancy whose influence may change as the student evolves.

Unlike existing teacher-guided methods, we use a likelihood-ratio-based policy-compatibility factor to regulate the contribution of teacher-intervened samples, thereby limiting the influence of samples that are highly mismatched with the current student policy.

\section{System Overview and Problem Description}
\label{section3}
\subsection{Overview of the Framework}
Fig. \ref{fig-framework} illustrates the overall architecture of the proposed teacher-guided safe transfer reinforcement learning framework. A teacher SAC agent is first trained in the source highway environment and then introduced into the target environment as a source-domain behavioral prior for student training. During online interaction, an adaptive intervention function determines whether the executed action is generated by the student policy or replaced by the teacher policy according to the recent safety cost and the current intervention threshold. This process reduces repeated high-cost exploration and naturally produces two types of transitions, namely student-generated samples and teacher-intervened samples, which are stored separately and jointly used through dual-source replay. For each transition, the teacher policy further provides an action-evaluation signal, which is incorporated into the value target as a teacher-induced shaping term to guide the student toward safer and more teacher-consistent behaviors. Meanwhile, a likelihood-ratio-based compatibility weight is computed to adjust the contribution of teacher-intervened samples during critic and actor updates, so that samples more compatible with the current student policy are emphasized in the optimization process. Through the joint use of adaptive intervention, teacher-induced value guidance, compatibility-aware reweighting, and dual-source replay, the student policy is progressively optimized to achieve a better balance between lane-changing efficiency and safety.

\subsection{Problem Formulation}
We consider tasks formulated by a constrained Markov decision process (CMDP) \cite{borkar2005actor,machines14060605}.
A CMDP can be described by a tuple $\mathcal{M} = \langle \mathcal{S}, \mathcal{A}, \mathcal{P}, r, c, d, \gamma \rangle$, where $\mathcal{S}$ denotes the state space and $\mathcal{A}$ denotes the action space. 
$\mathcal{P}:\mathcal{S}\times\mathcal{A}\times\mathcal{S}\rightarrow[0,1]$ specifies the transition probability from state $s$ to the next state $s'$ after taking action $a$, with $s\in\mathcal{S}$ and $a\in\mathcal{A}$. 
The function $r:\mathcal{S}\times\mathcal{A}\rightarrow\mathbb{R}$ denotes the reward function, and $c:\mathcal{S}\times\mathcal{A}\rightarrow\mathbb{R}$ denotes the cost function. 
$d\in\mathbb{R}_{+}$ and $\gamma\in[0,1)$ represent the cost limit and the discount factor, respectively.
In a safe RL problem, an agent interacts with the environment over time and generates a trajectory $\langle (s_0,a_0,r_0,c_0,s_1), (s_1,a_1,r_1,c_1,s_2), \ldots \rangle$. 
Starting from the initial state $s_0$, at each time step $t$, the agent is in a state $s_t \in \mathcal{S}$ and takes an action $a_t \in \mathcal{A}$. 
It then receives the corresponding reward $r_t = r(s_t,a_t)$ and cost $c_t = c(s_t,a_t)$, and transitions to the next state $s_{t+1} \sim \mathcal{P}(\cdot \mid s_t,a_t)$. This process continues from $s_{t+1}$ and repeats until a termination condition is met, after which a new trajectory is initiated.
The objective is to learn an optimal policy ($\pi^*$) that maximizes the expected discounted return while ensuring that the expected discounted cumulative cost remains below $d$:
\begin{equation}
\begin{aligned}
    \pi^*= \
    &\arg\max_{\pi}\; \underset{\xi\sim\rho_\pi}{\mathbb{E}} \left[ \sum_{t=0}^{\infty} \gamma^t r_t \right], \\
    &\text{s.t.} \quad \; \quad \ \ \underset{\xi\sim\rho_\pi}{\mathbb{E}} \left[ \sum_{t=0}^{\infty} \gamma^t c_t \right] \leq d,
\end{aligned}
\end{equation}
where $\xi$ denotes a trajectory sampled under the joint policy $\pi$, $\rho_{\pi}$ is the corresponding trajectory distribution.

We formalize our problem setting using the transfer learning framework for RL.
Given a source task and a target task, transfer learning aims to accelerate the learning process on the target task by leveraging knowledge acquired from the source task \cite{yang2023reinforcement}. 
In this study, a teacher policy ($\pi^T$) is first trained in the source task ($\mathcal{M}^\diamond$), and then introduced into the target task ($\mathcal{M}^\circ$) to guide the student agent through action intervention, thereby improving the efficiency of student policy learning.

\subsubsection{\textbf{State Space}}
A state describes the environment configuration with which the agent interacts. 
The observation of each vehicle $i$ is defined as $o_i=(p_i, \Delta x_i, \Delta y_i, \Delta v_i^{x}, \Delta v_i^{y})$, where $p_i$ is a presence indicator specifying whether vehicle $i$ is observable; 
$(\Delta x_i, \Delta y_i)$ denote the longitudinal and lateral positions of the surrounding vehicle $i$ relative to the ego vehicle, and $(\Delta v_i^{x}, \Delta v_i^{y})$ represent the relative longitudinal and lateral velocities between the surrounding vehicle $i$ and the ego vehicle, respectively.
The state space is defined as $\mathcal{S} = [o_1, o_2, \ldots, o_i, \ldots, o_M]$, and $M$ is  the maximum number of observed vehicles.

\subsubsection{\textbf{Action Space}}
The agent selects actions from the action space $\mathcal{A}$, which consists of high-level control commands. 
The action space $\mathcal{A}$ is defined as:
\begin{equation}
    \mathcal{A} = \left\{ A^{\leftarrow}, A^{\rightarrow}, A^{\uparrow}, A^{\downarrow}, A^{\oslash} \right\},
\end{equation}
where $A^{\leftarrow}$, $A^{\rightarrow}$, $A^{\uparrow}$, $A^{\downarrow}$, and $A^{\oslash}$ correspond to turning left, turning right, accelerating, decelerating, and maintaining speed, respectively.

\subsubsection{\textbf{Reward Function}}
The source and target tasks adopt different reward functions. Given the stringent safety requirements of safety-critical applications for transfer, we construct a source task without target-domain extrinsic task rewards to train a teacher policy that captures basic driving and exploration behaviors in a simpler environment. During training, no external task return is used; to improve exploration efficiency, an intrinsic auxiliary reward is introduced to encourage active motion and exploration:
\begin{equation}
r^\diamond = \left\| \mathbf{p}_{t}-\mathbf{p}_{t-1} \right\|_2 ,
\end{equation}
where $r^\diamond$ denotes the intrinsic auxiliary reward in the source task, and $\mathbf{p}_t = (x_t, y_t)$ denotes the position of the ego vehicle at time step $t$. 

In contrast, the target task follows a standard task-driven extrinsic reward mechanism, aiming to simultaneously promote safety, traffic efficiency, and task completion. In the highway lane-change scenario considered in this paper, the ego vehicle must complete a lane change toward the target lane and maintain stable driving within multi-lane traffic, while avoiding collisions, maintaining appropriate driving speed, and reaching the target lane, thereby enabling safe and efficient longitudinal progression and lateral decision-making. Accordingly, the reward received by the student agent at each time step is defined as a weighted sum of several components, including: a safety term that penalizes collision events, an efficiency term that encourages maintaining an appropriate driving speed, and a goal term that guides the policy toward the target lane.
Specifically, the instantaneous reward received is defined as:
\begin{equation}
r^\circ = w_s r_s + w_e r_e + w_g r_g,
\end{equation}
where $r^\circ$ denotes the instantaneous reward in the target task. $r_s$, $r_e$, and $r_g$ denote the collision penalty, efficiency reward, and goal reward, respectively. $w_s$, $w_e$, and $w_g$ are the corresponding weighting coefficients.

A collision penalty is imposed such that:
\begin{equation}
r_s =
\begin{cases}
-1, & \text{if a collision occurs}, \\
0,  & \text{otherwise},
\end{cases}
\end{equation}
which discourages unsafe behaviors that may cause collisions.

To promote driving efficiency, the speed reward is defined as:
\begin{equation}
r_e = \min \bigl(\frac{ v_{t} - v_{\min}}{ v_{\max} - v_{\min}},1 \bigr),
\end{equation}
where $v_{t}$ denotes the speed of the ego vehicle at time step $t$. 

To facilitate task completion in the highway lane-change scenario, a goal term is further introduced as:
\begin{equation}
r_g = 1 - \frac{\left|\ell_t-\ell_g\right|}{\mathcal{L}},
\end{equation}
where $\ell_t$ and $\ell_g$ denote the indices of the ego vehicle's current lane and the goal lane, respectively, and $\mathcal{L}$ is the total number of lanes on the current road. This normalized shaping reward increases as the ego vehicle approaches the target lane, thereby encouraging timely lane-change completion.

\subsubsection{\textbf{Cost Function}}
In addition to the reward design, we design a safety-related cost function to explicitly penalize situations with insufficient longitudinal safety margins. Specifically, we consider the time headway (THW) between the ego vehicle and the surrounding vehicle as the evaluation metric. The surrounding vehicle can be located either in front of or behind the ego vehicle in the current lane or the target lane. Let $d_\text{head}(t) \ge 0$ denote the longitudinal distance between the ego vehicle and surrounding vehicles at time step $t$. To penalize cases where the time headway falls below a predefined safety threshold $T_{\mathrm{safe}}$, the THW-based instantaneous cost is defined as:
\begin{equation}
c_{\mathrm{thw}}(t)=
\max \bigg(0,1-\dfrac{d_{\mathrm{head}}(t)}
{T_{\mathrm{safe}}\max(v_t,\varepsilon)}
\bigg),
\end{equation}
where $\varepsilon>0$ is a small constant to avoid numerical issues, and $T_{\mathrm{safe}}$ is the predefined safety threshold for time headway. This cost encourages the agent to maintain sufficient longitudinal interaction margins with surrounding vehicles during lane changing, thereby reducing close-distance driving risk. 



\section{Methodology}
\label{section4}
This section introduces the core components of the proposed SRATRL framework, including safety-regulated intervention, adaptive reward shaping, and policy-compatible optimization.

\subsection{Safety-Triggered Closed-Loop Teacher Intervention}
After completing the training of the teacher model, we train a new policy, referred to as the student policy, dedicated to the target task. To enhance safety during training and improve the student’s performance, we incorporate the teacher policy into the training loop, allowing the teacher policy $\pi^T$ and the student policy $\pi^S$ to work together and form the mixed behavior policy $\pi^{\mathrm{mix}}$. The term “teacher” only indicates that this policy is introduced to facilitate student training, and the optimality of the teacher model is not considered in this paper.

\textbf{Role of the Teacher Policy.} The teacher policy is used as a source-domain behavioral prior rather than an oracle safety expert. It is not assumed to be globally optimal in the target domain. Instead, the proposed framework selectively uses teacher guidance to reduce high-cost exploration during early target-domain adaptation, while gradually decreasing teacher influence as the student policy improves.

\subsubsection{\textbf{Action Intervention Function}}
During student training, the intervention function determines whether the action at the current time step is generated by the teacher policy network $\pi^T$ or the student policy network $\pi^S$. The student policy is then updated using the data collected under such interventions.
One intuitive design principle for the intervention function is to trigger intervention when the student’s behavior deviates from that of the teacher. In this work, we design a cost-based intervention function based on the instantaneous cost, as defined below:
\begin{equation}
    a_t = 
    \begin{cases} 
    a_t^T, & \text{if } c_{t-1} > \tau, \\
    a_t^S, & \text{otherwise,}
    \end{cases}
\end{equation}
where $\tau$ is the intervention threshold, which controls when the teacher policy intervenes. $a_t^T$ and $a_t^S$ denote teacher action and student action at the time step $t$, respectively.
The student policy keeps sampling with $\pi^{\mathrm{mix}}=\pi^{S}$ at the start of a trajectory. Once the first event $c_{t-1}>\tau$ is encountered, we set $\pi^{\mathrm{mix}}=\pi^{T}$ until the end of the episode. Consequently, the teacher policy provides a source-domain behavioral prior that can reduce high-cost exploration.
Based on this intervention function, the mixed behavior policy can be expressed in the following form:
\begin{equation}
\pi^{\text{mix}}(\cdot | s) = \mathcal{T}(s) \pi^T(\cdot | s) + (1 - \mathcal{T}(s)) \pi^S(\cdot | s),
\label{General representation of mixed behavior policy}
\end{equation}
where $\pi^{\mathrm{mix}}$ denotes the mixed behavior policy, and $\mathcal{T}(s) \in {0,1}$ denotes the intervention indicator at the current decision step. $\mathcal{T}(s)=1$ indicates that the teacher intervenes in action selection, whereas $\mathcal{T}(s)=0$ indicates otherwise.
The mixed behavior policy defined in \eqref{General representation of mixed behavior policy} provides an intuitive mechanism for safe exploration by allowing the teacher to intervene whenever necessary. A theoretical analysis of its effectiveness is presented in Section \ref{section6}.

\subsubsection{\textbf{Intervention Decay}}
Teacher intervention can reduce high-cost behaviors and accelerate the student’s learning process, substantially improving the student’s performance in the early stages of training. However, excessive reliance on the teacher’s guidance may hinder the student’s final performance. To address this issue, we introduce an intervention decay mechanism that gradually reduces the teacher’s intervention, thereby reducing the student’s dependence on the teacher. 
During training, this mechanism adaptively updates the intervention threshold $\tau$ based on recent safety performance. Specifically, the intervention rate is defined as the proportion of time steps within an epoch in which the teacher policy intervenes in the student policy. Let $r^*$ denote the reference intervention rate, and $\hat r(e)$ denote the average intervention rate at epoch $e$ computed using a sliding window over the most recent $W$ epochs. The threshold update rule is given by:
\begin{equation}
\tau(e+1)=\mathrm{clip}\Big(\tau(e)+\kappa\big(r^{*}-\hat r(e)\big),\,0,\,\tau_{\max}\Big),
\label{eq:tau_update}
\end{equation}
where $\kappa > 0$ is the update step size, $\tau_{\max}$ is the maximum allowable threshold, and $\mathrm{clip}(\cdot)$ ensures $\tau$ remains within a feasible range. The reference intervention rate $r^*$ is used to determine whether teacher intervention can be further relaxed.
As the student policy gradually becomes safer and the recent intervention rate falls below $r^*$, the intervention threshold increases, allowing the student to act more autonomously and progressively reducing its reliance on teacher intervention.

\subsubsection{\textbf{Dual-Source Data Training}}
The intervention-driven interaction naturally induces two types of transition samples with distinct origins. Let each transition be denoted by
\begin{equation}
\varsigma = (s, a, r, c, s').
\end{equation}
 According to whether teacher intervention is triggered, the collected transitions can be divided into a student-generated replay buffer and a teacher-generated replay buffer, which are defined as follows:
\begin{equation}
\mathcal{D}^S
=
\left\{
\varsigma \mid \mathcal{T}(s) = 0
\right\},
\qquad
\mathcal{D}^T
=
\left\{
\varsigma \mid \mathcal{T}(s) = 1
\right\},
\label{eq:dual_buffers}
\end{equation}
where $\mathcal{T}(s) \in \{0,1\}$ is the intervention indicator, with $\mathcal{T}(s)=1$ indicating teacher intervention and $\mathcal{T}(s)=0$ indicating student interaction.

Accordingly, the replay data used for training form a dual-source replay structure composed of $\mathcal{D}^S$ and $\mathcal{D}^T$, rather than a single-source experience distribution. Samples in $\mathcal{D}^T$ are mainly collected in safety-critical states and provide safety-aware guidance, whereas samples in $\mathcal{D}^S$ are generated during autonomous interaction and preserve task-driven exploration in nominal operating states. Therefore, the two replay sources play complementary roles in policy learning. To jointly utilize them, the mini-batch for each update is constructed as:
\begin{equation}
\mathcal{B}=\mathcal{B}^T \cup \mathcal{B}^S, \quad |\mathcal{B}^T| = \eta |\mathcal{B}|, \quad |\mathcal{B}^S| = (1 - \eta) |\mathcal{B}|,
\label{eq:batch_sampling}
\end{equation}
where $\mathcal{B}^{T} \subset \mathcal{D}^{T}$ and $\mathcal{B}^S \subset \mathcal{D}^S$ denote the teacher-generated and student-generated subsets contained in the current mini-batch, respectively, and $\eta$ is a mixing parameter determined by the teacher intervention rate during training.  Although $\mathcal{D}^{T}$ contains more samples from safety-critical states, the proposed dual-source replay does not overemphasize these risky regions. The sampling proportion is controlled by the teacher intervention rate and gradually decreases as teacher intervention decays. Consequently, student-generated samples from normal low-cost regions dominate the policy update in the later training stage, which helps preserve the generalization ability of the student policy in normal driving states.


\subsection{Safety-Adaptive Teacher-Guided Value Shaping}

\begin{figure}[t]
    \centering
    \includegraphics[width=0.95\linewidth]{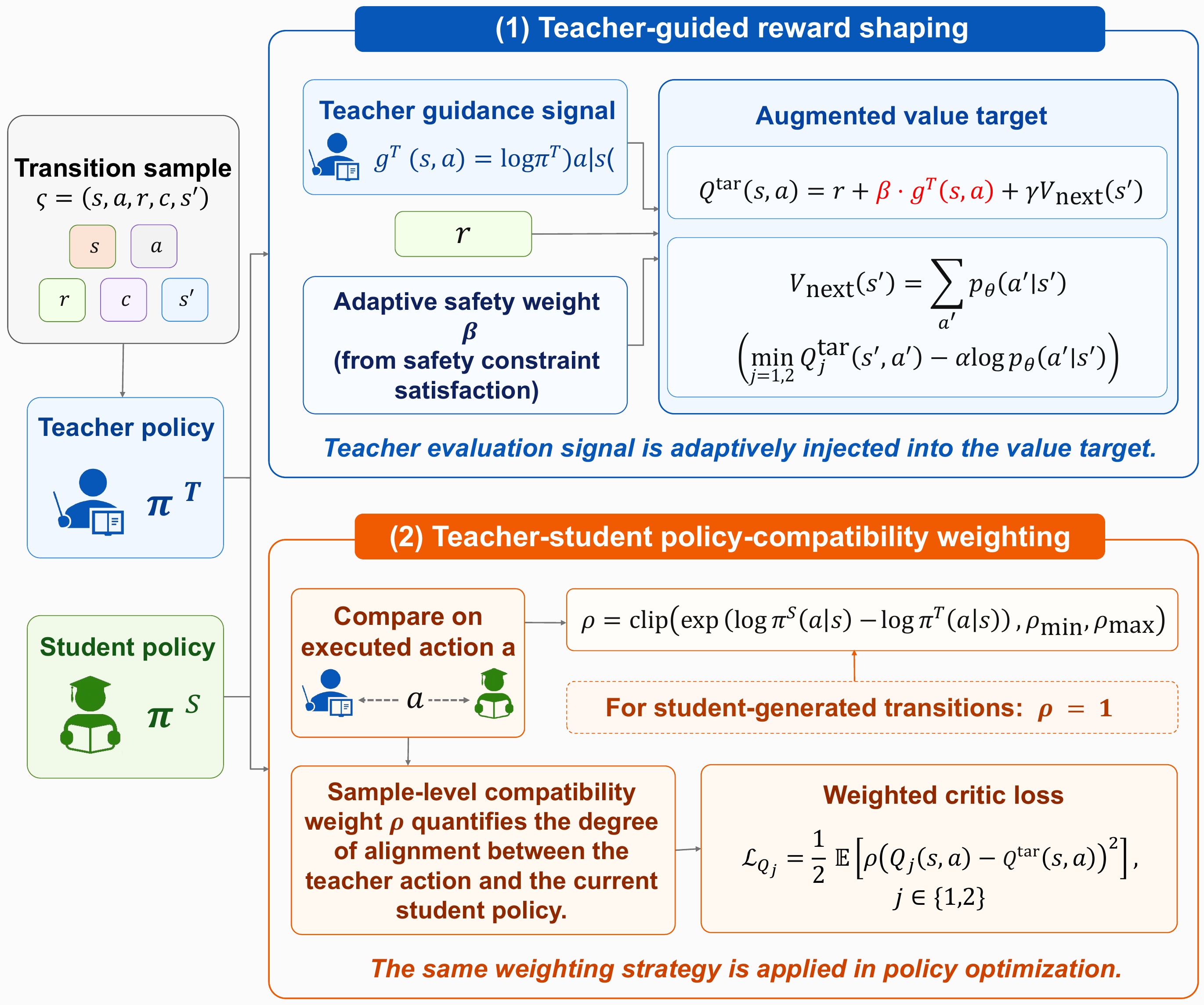}
    \vspace{0.2cm}
    \caption{The proposed framework incorporates teacher guidance into the student policy optimization process to improve learning safety and transfer stability under safety constraints.}
    \label{fig-policy-learning}
    \vspace{-5mm}
\end{figure}

\subsubsection{\textbf{Teacher-induced Reward Shaping}}
For any transition $( s, a, r, c, s')$, in addition to the extrinsic environment reward $r$, we compute an auxiliary signal from the teacher policy to quantify how well the action $a$ aligns with the teacher’s behavior:
\begin{equation}
g^T(s,a) = \log \pi^{T}(a \mid s).
\end{equation}
Since $g^T(s,a)\leq 0$, this signal can be interpreted as a teacher-consistency regularization term. An action assigned a high probability by the teacher incurs only a small shaping penalty. In this way, the teacher prior biases value learning toward teacher-consistent behaviors during target-domain adaptation.
To adaptively regulate the strength of this guidance according to the current safety condition, the teacher-consistency signal is scaled by $\beta$, which is the Lagrange multiplier associated with the safety-cost constraint. As the safety-constraint pressure increases, $\beta$ increases and strengthens the penalty on teacher-inconsistent actions. Conversely, as the safety constraint is better satisfied, the influence of the teacher-consistency term is gradually reduced. This shared multiplier directly couples the strength of transferred teacher guidance with the current safety-constraint pressure.

\subsubsection{\textbf{Value Target Construction}}
Within the maximum-entropy actor--critic framework, the update of the action-value function is typically based on a target value obtained by marginalizing over the next-state action distribution, thereby incorporating entropy regularization into value estimation. Specifically, given the next state $s'$, the policy $\pi_\theta(\cdot\mid s')$ outputs the action probabilities $p_\theta(\cdot\mid s')$ and the corresponding log-likelihoods $\log p_\theta(\cdot\mid s')$. The expected next-state value is then computed as:
\begin{equation}
V_{\text{next}}(s')
=
\sum_{a'} p_\theta(a'\mid s')
(
\min_{j=1,2} Q_j^{\mathrm{tar}}(s',a')
-\alpha \log p_\theta(a'\mid s')
),
\end{equation}
where $\alpha$ is the entropy regularization coefficient, and $Q_j^{\mathrm{tar}}$ denotes the target action-value function parameterized by a slowly updated target network to stabilize learning.

To incorporate teacher-guidance information into value learning, we take the teacher guidance signal corresponding to the current action as an auxiliary term to construct an augmented target (see Fig. \ref{fig-policy-learning}):
\begin{equation}
Q^{\mathrm{tar}}(s,a) = r + \beta \cdot g^T(s,a) + \gamma V_{\text{next}}(s'),
\label{eq-q-target}
\end{equation}
where $\gamma$ is the discount factor, and $Q^{\mathrm{tar}}(s,a)$ denotes the temporal-difference target used for critic learning. Compared with the standard maximum-entropy update, the additional term introduces a safety-adaptive teacher prior into value estimation. In this manner, the strength of teacher guidance is adaptively regulated according to the evolving safety-constraint pressure, rather than being applied with a fixed shaping strength.

\subsection{Teacher--Student Policy-Compatibility Weighting}
Besides incorporating the teacher’s evaluation signal into the value target, we further adjust the contribution of teacher-intervened transitions during optimization by introducing a likelihood-ratio-based weighting factor, as shown in Fig. \ref{fig-policy-learning}. Specifically, for a teacher-intervened action $a$, we define:
\begin{equation}
    \rho = \mathrm{clip} \left( \exp(\log \pi^S(a|s)-\log \pi^T(a|s)),\rho_\mathrm{min}, \rho_\mathrm{max} \right),
\end{equation}
where $\pi^S$ and $\pi^T$ denote the student and teacher policies, respectively. For student-generated transitions, we simply set $\rho=1$.
The ratio reflects the relative likelihood of the executed action under the current student policy rather than the global similarity between the teacher and student policy distributions. For a teacher-intervened transition, a larger value indicates that the executed action is relatively better supported by the evolving student policy, whereas a smaller value indicates a stronger discrepancy between the behavior that generated the sample and the current student policy. Therefore, the clipped ratio is used as a sample-level weighting factor rather than as an unbiased importance-sampling correction.
Accordingly, the critic network losses can be written in the weighted form:
\begin{equation}
    \mathcal{L}_{Q_j} = \frac{1}{2} \, \mathbb{E} \left[ \rho \big( Q_j(s, a) - Q^{\text{tar}}(s,a) \big)^2 \right], \quad j \in \{1, 2\},
    \label{eq-critic-loss}
\end{equation}
where $\mathcal{L}_{Q_j}$ is the critic loss, and $Q_j(s, a)$ denotes the action-value estimated by the corresponding online critic network. The same weighting strategy is also applied in the policy optimization step.


\subsection{SRATRL Algorithm}
\begin{algorithm}[t]
\caption{Safety-Regulated Adaptive Transfer RL}
\label{alg:tgst_sac}
\begin{algorithmic}[1]
\REQUIRE Student environment $\mathcal{E}^S$, teacher policy $\pi^T$, replay buffers $\mathcal{D}^S$, $\mathcal{D}^T$.
\STATE \textbf{Initialize} student policy network, reward and cost critic networks, corresponding critic target networks, intervention threshold $\tau$, and intervention flag $F_\text{int}$.
\FOR{each interaction step}
    \STATE Obtain current student state $s$.
    \IF{$F_\text{int}$= True}
        \STATE Select action $a \sim \pi^T(\cdot|s)$.
        \STATE Compute $\rho$ and teacher guidance signal $g^T(s,a)$.
    \ELSE
        \STATE Select action $a \sim \pi^S(\cdot|s)$.
        \STATE Set $\rho \gets 1$, and compute $g^T(s,a)$.
    \ENDIF

    \STATE Execute $a$ in $\mathcal{E}^S$, observe $(s,a,r,c,s')$.
    \IF{$F_\text{int} = \text{False}$}
        \STATE Store $(s,a,r,g^T(s,a),\rho,s',c)$ into $\mathcal{D}^S$.
    \ELSE
        \STATE Store $(s,a,r,g^T(s,a),\rho,s',c)$ into $\mathcal{D}^T$.
    \ENDIF
    \IF{$c > \tau$}
        \STATE Activate teacher intervention for the remaining episode: $F_\text{int}= \text{True}$.
    \ENDIF

    \STATE Set $s \gets s'$.

    \IF{end of episode}
        \STATE $F_\text{int}= \text{False}$.
    \ENDIF
    \IF{update condition is satisfied}
        \STATE Composite sampling from $\mathcal{D}^S$ and $\mathcal{D}^T$ with $\eta$.
        \STATE Update the reward and cost critics weighted by $\rho$.
        \STATE Update $\pi^{\mathrm{S}}$ by minimizing the actor loss.
    \ENDIF

\ENDFOR
\end{algorithmic}
\vspace{-1mm}
\end{algorithm}

The overall training procedure of SRATRL is summarized in Algorithm \ref{alg:tgst_sac}. During interaction, the student acts autonomously until the safety cost exceeds the intervention threshold, after which the teacher takes over for the remainder of the episode. Student-generated and teacher-intervened transitions are stored separately in $\mathcal{D}^S$ and $\mathcal{D}^T$, respectively. Teacher guidance is incorporated into learning through the teacher-consistency shaping signal $g^T(s,a)$ and the relative-likelihood weight $\rho$. During optimization, samples from the two replay buffers are jointly drawn according to the mixing ratio $\eta$, and the reward critics, cost critic, and student policy are updated accordingly. At the end of each epoch, the recent intervention rate is used to adapt the intervention threshold $\tau$ and replay mixing ratio $\eta$. Through this procedure, SRATRL coordinates teacher intervention, safety-adaptive value guidance, and weighted dual-source learning during target-domain adaptation.
\section{Theoretical Analysis}
\label{section6}
This section analyzes the performance deviation of the mixed behavior policy induced by teacher intervention. The analysis does not claim monotonic policy improvement or formal satisfaction of CMDP safety constraints, but instead focuses on the return deviation between the mixed behavior policy and the teacher policy. The results show that effective intervention constrains the mixed policy within a bounded neighborhood around the teacher policy. 

\begin{lemma}
\label{lamme1}
    Consider a MDP with discount factor $\gamma \in (0,1)$ and bounded reward $r(s,a) \in [0, R_{max}]$. For any stochastic policy $\pi$, define the discounted state distribution of $\pi$ as:
    \begin{equation}
    d_{\pi}(s) = 
    (1-\gamma)\sum_{t=0}^{\infty}\gamma^{t}\Pr\!\left(s_t=s\,\middle|\,\pi,d_0\right),
    \end{equation}
    where $d_0$ is the initial-state distribution and $\Pr(s_t=s\mid\pi,d_0)$ denotes the state visitation probability at time step $t$ under policy $\pi$.

    Then the following bound holds \cite{xue2023guarded}: 
    \begin{align}
        \left| J(\pi) - J(\pi') \right|
    \leq
    \frac{R_{\max}}{(1 - \gamma)^2}
    \, \mathbb{E}_{s \sim d_{\pi}}
    \left\| \pi(\cdot \mid s) - \pi'(\cdot \mid s) \right\|_1.
    \end{align}
\end{lemma}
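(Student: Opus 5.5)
The plan is to derive the bound from the performance difference lemma of Kakade and Langford and then estimate the resulting advantage term using the boundedness of the action-value function. Throughout, I take $J(\pi)=\mathbb{E}_{s_0\sim d_0,\,\pi}\bigl[\sum_{t\ge 0}\gamma^t r(s_t,a_t)\bigr]$, with $V^{\pi'}$, $Q^{\pi'}$ and $A^{\pi'}=Q^{\pi'}-V^{\pi'}$ the usual value, action-value and advantage functions of $\pi'$. Because $r\in[0,R_{\max}]$, we have $0\le Q^{\pi'}(s,a)\le R_{\max}/(1-\gamma)$ for every $(s,a)$.

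\textbf{Step 1: performance difference identity.} I will first establish
\begin{equation*}
J(\pi)-J(\pi')=\frac{1}{1-\gamma}\,\mathbb{E}_{s\sim d_{\pi}}\,\mathbb{E}_{a\sim\pi(\cdot\mid s)}\bigl[A^{\pi'}(s,a)\bigr].
\end{equation*}
The route is a telescoping argument along trajectories generated by $\pi$. Write
\begin{equation*}
\sum_t\gamma^t r_t-V^{\pi'}(s_0)=\sum_t\gamma^t\bigl(r_t+\gamma V^{\pi'}(s_{t+1})-V^{\pi'}(s_t)\bigr).
\end{equation*}
This rearrangement is legitimate because $V^{\pi'}$ is bounded, so the telescoping tail vanishes. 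Taking expectations over trajectories of $\pi$ started from $d_0$, and conditioning on $(s_t,a_t)$, each summand becomes $\mathbb{E}[A^{\pi'}(s_t,a_t)]$. Collecting the weights $\gamma^t\Pr(s_t=s\mid\pi,d_0)$ yields the factor $\frac{1}{1-\gamma}d_\pi(s)$, by the definition of $d_\pi$ in the statement. The interchange of sum and expectation is justified by dominated convergence, since all terms are bounded by $R_{\max}/(1-\gamma)$ times $\gamma^t$. This step is the main technical point and requires the most care. The essential feature is that the state distribution comes from $\pi$ while the advantage comes from $\pi'$, which is exactly why the lemma's expectation is taken under $d_\pi$.

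\textbf{Step 2: rewrite the inner expectation as a policy difference.} Since $\sum_a\pi'(a\mid s)Q^{\pi'}(s,a)=V^{\pi'}(s)$ and $\sum_a\pi(a\mid s)=1$, the inner term simplifies to
\begin{equation*}
\mathbb{E}_{a\sim\pi}[A^{\pi'}(s,a)]=\sum_a\bigl(\pi(a\mid s)-\pi'(a\mid s)\bigr)Q^{\pi'}(s,a).
\end{equation*}

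\textbf{Step 3: Hölder's inequality.} Applying Hölder's inequality and the bound on $Q^{\pi'}$ gives
\begin{equation*}
\Bigl|\sum_a(\pi-\pi')Q^{\pi'}\Bigr|\le\|\pi(\cdot\mid s)-\pi'(\cdot\mid s)\|_1\,\frac{R_{\max}}{1-\gamma}.
\end{equation*}
A sharper constant $R_{\max}/(2(1-\gamma))$ is available by subtracting the midpoint constant from $Q^{\pi'}$, but it is not needed.

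\textbf{Step 4: combine.} Take absolute values in Step 1 and move them inside the expectation using Jensen's inequality. Substituting the bound from Step 3 gives
\begin{equation*}
|J(\pi)-J(\pi')|\le\frac{R_{\max}}{(1-\gamma)^2}\,\mathbb{E}_{s\sim d_\pi}\|\pi(\cdot\mid s)-\pi'(\cdot\mid s)\|_1,
\end{equation*}
which is the claimed bound. For continuous actions the same argument works with sums replaced by integrals, but the action space here is finite.
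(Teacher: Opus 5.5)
Your proposal is correct and follows essentially the same route as the paper: the performance difference lemma, rewriting the inner expectation as a sum weighted by $\pi(a\mid s)-\pi'(a\mid s)$, H\"older's inequality, and the bound $R_{\max}/(1-\gamma)$ on the value functions. There are two cosmetic differences. You prove the performance difference identity by telescoping rather than citing Kakade--Langford. You also apply H\"older to $Q^{\pi'}$ instead of $A^{\pi'}$; this is equivalent because $\sum_a\bigl(\pi(a\mid s)-\pi'(a\mid s)\bigr)V^{\pi'}(s)=0$, and slightly cleaner because $0\le Q^{\pi'}\le R_{\max}/(1-\gamma)$ is immediate from $r\in[0,R_{\max}]$.
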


\begin{proof}
    According to the Performance Difference Lemma \cite{kakade2002approximately}, for any two policies $\pi$ and $\pi'$, the difference in their discounted returns can be expressed as follows:
    \begin{equation}
J(\pi) - J(\pi') = \frac{1}{1 - \gamma} \, \mathbb{E}_{s \sim d_{\pi}} \left[ \mathbb{E}_{a \sim \pi(\cdot|s)} A_{\pi'}(s,a) \right],
\label{lamme1_p1}
\end{equation}
where $A_{\pi'}(s,a) \triangleq Q_{\pi'}(s,a) - V_{\pi'}(s)$ denotes the advantage function with respect to $\pi'$.

For a fixed state $s$,
\begin{equation}
\mathbb{E}_{a \sim \pi(\cdot|s)} A_{\pi'}(s,a) = \sum_a \pi(a|s) A_{\pi'}(s,a).
\end{equation}

From the definition of the advantage function, we have:
\begin{equation}
\mathbb{E}_{a \sim \pi'(\cdot|s)} A_{\pi'}(s,a) = 0.
\label{lamme1_p3}
\end{equation}

Therefore,
\begin{equation}
\begin{aligned}
&\mathbb{E}_{a \sim \pi(\cdot\mid s)} A_{\pi'}(s,a)
-\mathbb{E}_{a \sim \pi'(\cdot\mid s)} A_{\pi'}(s,a)\\
=& \sum_a \bigl(\pi(a\mid s)-\pi'(a\mid s)\bigr)\,A_{\pi'}(s,a),
\end{aligned}
\end{equation}
and using \eqref{lamme1_p3} we obtain:

\begin{equation}
\mathbb{E}_{a \sim \pi(\cdot|s)} A_{\pi'}(s,a) = \sum_a \bigl(\pi(a|s) - \pi'(a|s)\bigr) A_{\pi'}(s,a).
\label{lamme1_p5}
\end{equation}

Applying Hölder's inequality \cite{folland1999real} to \eqref{lamme1_p5} yields
\begin{equation}
\left| \mathbb{E}_{a \sim \pi(\cdot|s)} A_{\pi'}(s,a) \right|
\leq \|\pi(\cdot|s) - \pi'(\cdot|s)\|_1 \cdot \|A_{\pi'}(s, \cdot)\|_\infty.
\label{lamme1_p6}
\end{equation}

Because $ r(s,a) \leq R_{\max}$, for any policy $\pi'$ we have:
\begin{equation}
Q_{\pi'}(s,a) \leq \sum_{t=0}^{\infty} \gamma^t R_{\max} = \frac{R_{\max}}{1-\gamma}, \quad
V_{\pi'}(s) \leq \frac{R_{\max}}{1-\gamma}.
\end{equation}

Hence,
\begin{equation}
|A_{\pi'}(s,a)| = |Q_{\pi'}(s,a) - V_{\pi'}(s)|
\leq \frac{R_{\max}}{1-\gamma},
\label{lamme1_p8}
\end{equation}
which implies $\|A_{\pi'}(s, \cdot)\|_\infty \leq \frac{R_{\max}}{1-\gamma}$.

Substituting \eqref{lamme1_p6} and \eqref{lamme1_p8} into \eqref{lamme1_p1} gives
\begin{equation}
\begin{aligned}
|J(\pi) - J(\pi')|
&\leq \frac{1}{1-\gamma} \, \mathbb{E}_{s \sim d_{\pi}} 
\left[
\|\pi(\cdot|s) - \pi'(\cdot|s)\|_1 \cdot \frac{R_{\max}}{1-\gamma}
\right] \\
&= \frac{R_{\max}}{(1-\gamma)^2} \, \mathbb{E}_{s \sim d_{\pi}} 
\|\pi(\cdot|s) - \pi'(\cdot|s)\|_1,
\end{aligned}
\end{equation}
which is exactly Lemma \ref{lamme1}.
\end{proof}

\begin{assumption}[Policy Smoothness in Finite Discrete Action Space]
\label{assumption1}
Consider a finite discrete action space $\mathcal{A}$ with $|\mathcal{A}| < \infty$. The student policy $\pi^S$ is assumed to be uniformly lower bounded, i.e., there exists a constant $\delta > 0$ such that for all states $s \in \mathcal{S}$ and all actions $a \in \mathcal{A}$,

\begin{equation}
\pi^S(a \mid s) \ge \delta.
\end{equation}

\end{assumption}

\begin{theorem}
\label{theorem1}
With the action intervention function incorporated, the lower and upper bounds of the return of the behavior policy $J(\pi^{\mathrm{mix}})$ are respectively expressed as:

\begin{equation}
\begin{aligned}
&J(\pi^T) - \frac{\sqrt{2}(1-\omega)R_{\max}\sqrt{-\log \delta}}{(1-\gamma)^2} \\
\leq &J(\pi^{\mathrm{mix}}) \\
\leq &J(\pi^T) 
+ \frac{\sqrt{2}(1-\omega)R_{\max}\sqrt{-\log \delta}}{(1-\gamma)^2},
\end{aligned}
\end{equation}
where $\omega = \frac{\mathbb{E}_{s \sim d_{\pi^{\mathrm{mix}}}} \left\| \mathcal{T}(s) \left[ \pi^T(\cdot|s) - \pi^S(\cdot|s) \right] \right\|_1}{\mathbb{E}_{s \sim d_{\pi^{\mathrm{mix}}}} \left\| \pi^T(\cdot|s) - \pi^S(\cdot|s) \right\|_1}$ is the effective intervention coefficient determined by the switch function \cite{Zhou2025S2CD}.
\end{theorem}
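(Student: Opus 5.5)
The plan is to apply Lemma~\ref{lamme1} with $\pi=\pi^{\mathrm{mix}}$ and $\pi'=\pi^T$. This choice is deliberate: it puts the expectation under $d_{\pi^{\mathrm{mix}}}$, which is exactly the measure appearing in the definition of $\omega$. The lemma gives
\begin{equation*}
\left|J(\pi^{\mathrm{mix}})-J(\pi^T)\right|\le \frac{R_{\max}}{(1-\gamma)^2}\,\mathbb{E}_{s\sim d_{\pi^{\mathrm{mix}}}}\left\|\pi^{\mathrm{mix}}(\cdot|s)-\pi^T(\cdot|s)\right\|_1 .
\end{equation*}
Both inequalities of the theorem then follow at once, provided the expected $\ell_1$ term is bounded by $\sqrt{2}(1-\omega)\sqrt{-\log\delta}$.

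The first step is algebraic. From \eqref{General representation of mixed behavior policy},
\begin{equation*}
\pi^{\mathrm{mix}}-\pi^T=(1-\mathcal{T}(s))(\pi^S-\pi^T)=(\pi^S-\pi^T)-\mathcal{T}(s)(\pi^S-\pi^T).
\end{equation*}
Since $\mathcal{T}(s)\in\{0,1\}$, for each state the norm satisfies
\begin{equation*}
\|\pi^{\mathrm{mix}}-\pi^T\|_1=\|\pi^T-\pi^S\|_1-\|\mathcal{T}(s)(\pi^T-\pi^S)\|_1 .
\end{equation*}
This holds with equality, not merely as a triangle-inequality bound. Taking expectations under $d_{\pi^{\mathrm{mix}}}$ and using the definition of $\omega$ gives
\begin{equation*}
\mathbb{E}\|\pi^{\mathrm{mix}}-\pi^T\|_1=(1-\omega)\,\mathbb{E}_{s\sim d_{\pi^{\mathrm{mix}}}}\|\pi^T(\cdot|s)-\pi^S(\cdot|s)\|_1 .
\end{equation*}
If the denominator of $\omega$ vanishes, the policies coincide almost surely and the bound is trivial.

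The second step is to bound $\|\pi^T(\cdot|s)-\pi^S(\cdot|s)\|_1$ uniformly in $s$. Pinsker's inequality gives
\begin{equation*}
\|\pi^T-\pi^S\|_1\le\sqrt{2\,D_{\mathrm{KL}}(\pi^T\|\pi^S)}.
\end{equation*}
Assumption~\ref{assumption1} then controls the divergence:
\begin{equation*}
D_{\mathrm{KL}}(\pi^T\|\pi^S)=\sum_a\pi^T(a|s)\log\pi^T(a|s)-\sum_a\pi^T(a|s)\log\pi^S(a|s)\le 0-\log\delta ,
\end{equation*}
because the negative-entropy term is nonpositive and $-\log\pi^S(a|s)\le-\log\delta$. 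Hence $\|\pi^T-\pi^S\|_1\le\sqrt{2}\sqrt{-\log\delta}$ for every $s$, and the same bound holds for the expectation. Combining the two steps gives
\begin{equation*}
\left|J(\pi^{\mathrm{mix}})-J(\pi^T)\right|\le \frac{\sqrt2(1-\omega)R_{\max}\sqrt{-\log\delta}}{(1-\gamma)^2},
\end{equation*}
which, unfolded, is the stated pair of bounds.

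The main point needing care is the first step. The factor $(1-\omega)$ must come out exactly, which requires two things: the lemma must be invoked with the state distribution of $\pi^{\mathrm{mix}}$, not that of $\pi^T$; and the per-state identity $\|(1-\mathcal{T})x\|_1=\|x\|_1-\|\mathcal{T}x\|_1$ must be used, which relies on $\mathcal{T}(s)$ being binary. A secondary subtlety is that $\mathcal{T}$ in the paper depends on the episode history (once triggered, the teacher acts for the rest of the episode). I would handle this by treating $\mathcal{T}(s)$ as a fixed state-dependent switch, as \eqref{General representation of mixed behavior policy} does, or by augmenting the state with the intervention flag so that Lemma~\ref{lamme1} applies to a Markov policy.
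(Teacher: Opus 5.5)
Your proposal is correct and follows the paper's proof step for step. It applies Lemma~\ref{lamme1} with $\pi=\pi^{\mathrm{mix}}$ and $\pi'=\pi^T$, extracts the $(1-\omega)$ factor from the mixture form, uses Pinsker's inequality, and bounds $D_{\mathrm{KL}}(\pi^T\|\pi^S)\le-\log\delta$ via Assumption~\ref{assumption1}. Your per-state identity (which in fact holds for any $\mathcal{T}(s)\in[0,1]$, so being binary is not essential) and your state-augmentation remark for the history-dependent switch make explicit details that the paper's proof leaves implicit.
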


\begin{proof}
Using the mixed behavior policy as defined in \eqref{General representation of mixed behavior policy},
\begin{equation}
\label{theorem1_p1}
\begin{aligned}
&\mathbb{E}_{s \sim d_{\text{mix}}} \left\| \pi^{\text{mix}}(\cdot|s) - \pi^T(\cdot|s) \right\|_1 \\
&= (1 - \omega) \mathbb{E}_{s \sim d_{\text{mix}}} \left\| \pi^S(\cdot|s) - \pi^T(\cdot|s) \right\|_1 .
\end{aligned}
\end{equation}


Applying Pinsker's inequality \cite{cover2006elements}, we have:
\begin{equation}
    \left\| \pi^S(\cdot|s) - \pi^T(\cdot|s) \right\|_1 
    \leq \sqrt{2 D_{\text{KL}}\bigl(\pi^T(\cdot|s) \parallel \pi^S(\cdot|s)\bigr)}.
\label{theorem1_p2}
\end{equation}

For any fixed state $s$, the Kullback-Leibler (KL) divergence between teacher policy and student policy is defined as:
\begin{equation}
    D_{\text{KL}}\bigl(\pi^T(\cdot|s) \parallel \pi^S(\cdot|s)\bigr) = \sum_a \pi^T(a|s) \log \frac{\pi^T(a|s)}{\pi^S(a|s)}.
\end{equation}

Since \(\pi^S(a|s) \ge \delta\), it follows that \(\frac{1}{\pi^S(a|s)} \leq \frac{1}{\delta}\). Therefore:

\begin{equation}
\begin{aligned}
&D_{\text{KL}}\bigl(\pi^T(\cdot|s) \parallel \pi^S(\cdot|s)\bigr) 
\leq \sum_a \pi^T(a\mid s) \log \frac{\pi^T(a\mid s)}{\delta} \\
&= \sum_a \pi^T(a\mid s) \log \pi^T(a\mid s)
   - \log \delta \sum_a \pi^T(a\mid s).
\end{aligned}
\end{equation}

Using the definition of entropy $H\bigl(\pi^T(\cdot|s)\bigr) = -\sum_a \pi^T(a|s) \log \pi^T(a|s) \ge 0$, we obtain:

\begin{equation}
D_{\text{KL}}\bigl(\pi^T(\cdot|s) \parallel \pi^S(\cdot|s)\bigr)
\leq -\log \delta.
\label{theorem1_p3}
\end{equation}

By combining (\ref{theorem1_p1}), (\ref{theorem1_p2}), (\ref{theorem1_p3}) with Lemma \ref{lamme1}, the following result can be obtained:
\begin{equation}
\begin{aligned}
\left| J(\pi^{\text{mix}}) - J(\pi^T) \right| 
\leq \frac{\sqrt{2}(1 - \omega) R_{\max} \sqrt{-\log \delta}}{(1 - \gamma)^2}.
\end{aligned}
\end{equation}

Therefore, we obtain:
\begin{equation}
\begin{aligned}
&J(\pi^T) - \frac{\sqrt{2}(1-\omega)R_{\max}\sqrt{-\log \delta}}{(1-\gamma)^2} \\
\leq &J(\pi^{\mathrm{mix}}) \\
\leq &J(\pi^T) 
+ \frac{\sqrt{2}(1-\omega)R_{\max}\sqrt{-\log \delta}}{(1-\gamma)^2}.
\end{aligned}
\end{equation}
\end{proof}

\begin{remark}
Theorem \ref{theorem1} indicates that stronger effective teacher involvement leads to a tighter bound on the return deviation between the mixed behavior policy and the teacher policy. Therefore, when the teacher provides a useful source-domain behavioral prior, intervention can constrain early target-domain behavior within a bounded return neighborhood of the teacher policy. This result characterizes the effect of teacher intervention rather than guaranteeing monotonic improvement or strict safety constraint satisfaction.
\end{remark}

\section{Experimental Setup}
\label{section7}
\subsection{Simulation Environment}
The proposed method is evaluated in a highway lane-changing scenario. In this setting, the ego vehicle makes lane-changing decisions under the dynamic interactions and influences of surrounding vehicles, with the objective of safely completing a lane change toward the rightmost target lane and maintaining stable driving. At the beginning of each episode, the ego vehicle and surrounding vehicles are randomly initialized on the road. The simulation parameters are set as follows: the lane width is 5 m; the vehicle length and width are 5 m and 2 m, respectively. If the ego vehicle makes an improper lane-change decision during driving that results in a collision or departure from the road boundary, the episode is deemed a failure, and the environment is immediately reset to start the next episode.
To evaluate the policy performance under different traffic conditions, we consider low-, medium-, and high-density traffic scenarios. The traffic density is adjusted by varying the longitudinal spacing between surrounding vehicles, defined as $d_{k,k+1}=d_s+v_{k+1}/\zeta$, where $d_{k,k+1}$ denotes the longitudinal spacing between adjacent vehicles, $v_{k+1}$ is the velocity of the rear vehicle, $d_s$ is the safety distance, and $\eta$ is the traffic-density coefficient. A larger $\zeta$ results in a smaller vehicle spacing and thus corresponds to a higher traffic density. Detailed traffic-density settings can be found in \cite{li2026risk}.

\subsection{Implementation Details}
The simulation environment is developed based on a modified version of the open-source \textit{gym-highway-env} simulator \cite{Edouard2018Highway}. Vehicle motion is propagated using a kinematic bicycle model \cite{Polack2017IV}. For HDVs, longitudinal dynamics and lateral lane-changing behaviors are governed by the Intelligent Driver Model (IDM) \cite{treiber2000congested} and the MOBIL model \cite{2001Preferred}, respectively.
The high-level decisions generated by the policy are executed in the simulation environment with a simulation frequency of 10 Hz.
To ensure reproducibility and statistical robustness, all algorithms are trained by interacting with the environment for $5 \times 10^5$ steps using multiple different random seeds. After training, all algorithms are evaluated using multiple random seeds with 100 test episodes per seed. The discount factor and learning rate are set to $0.99$ and $1 \times 10^{-4}$, respectively. Parameter gradients are updated every 100 steps, and each update samples a mini-batch of 256 transitions from the replay buffer.
Further details are shown in Table \ref{tab-Hyperparameters}.
All experiments are conducted on a personal PC equipped with an Intel Core i5-10210U CPU and 16 GB RAM
\footnote{\url{https://github.com/HuangWJ-12/TG-STRL}}.

\begin{table}[t]
    \caption{\textsc{Hyperparameters Setting}}
    \centering
    \begin{tabularx}{\linewidth}{l X c}
    \toprule
        Parameter & Description & Value \\
    \midrule
        $T_\text{safe}$ & Safe threshold for time headway & 1.2 s \\
        $\eta_{\pi}$ & Actor network learning rate & $1\times 10^\mathrm{-4}$ \\
        $\eta_{Q}$ & Reward critic network learning rate & $1\times 10^\mathrm{-4}$ \\ 
        $\eta_{C}$ & Cost critic network learning rate & $1\times 10^\mathrm{-4}$ \\ 
        $\gamma$ & Discount factor & 0.99 \\
        $\mathcal{B}$ & Mini-batch size & 256 \\
        $n_e$ & Total steps per epoch & 4000 \\
        $\kappa$ & Intervention threshold update step size & 0.5 \\
    \bottomrule
    \end{tabularx}
    \vspace{-5mm}
    \label{tab-Hyperparameters}
\end{table}

We compare the proposed method with representative baselines covering constrained reinforcement learning, direct policy transfer, and teacher-guided reinforcement learning. SAC-Lag \cite{yang2023safety} and PPO-Lag \cite{stooke2020responsive} are adopted as off-policy and on-policy constrained RL baselines, respectively, both incorporating Lagrangian safety optimization. In addition, Fine-tuning (FT) \cite{julian2021never} is considered as a direct policy-transfer baseline, where the source-domain policy is used to initialize the student policy for target-domain adaptation.
Expert Guided Policy Optimization (EPGO) \cite{peng2022safe} is included as a teacher-guided reinforcement learning baseline that employs expert intervention during online training. Together, these baselines allow us to evaluate the proposed method against safety-constrained policy learning, direct source-policy reuse, and action-level teacher guidance.
Considering safety and traffic efficiency, the following evaluation metrics are adopted: (i) Average Reward $(\uparrow)$: the average reward obtained by the ego vehicle per episode; (ii) Average Cost $(\downarrow)$: the average cost of the ego vehicle for each episode; (iii) Crash Ratio $(\downarrow)$: the ratio of the ego vehicle that collides with surrounding vehicles, represented as a decimal value between 0 and 1; (iv) Average Velocity $(\uparrow)$: the average speed maintained by the vehicle during driving.

\section{Experiment Results}
\label{section8}
\begin{figure*}[t]
    \centering
    \includegraphics[width=1.0\linewidth]{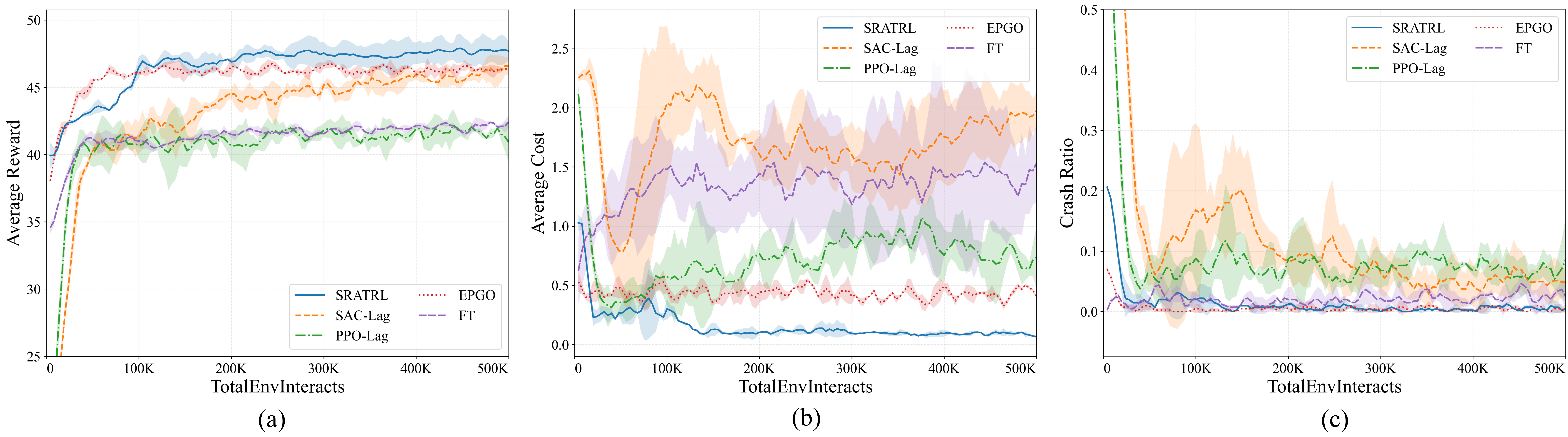}
    \vspace{-6mm}
    \caption{Comparative study. (a) Average reward, (b) Average cost, (c) Crash ratio. The shaded areas represent the standard deviation across multiple random seeds. Training curves of average reward, average cost, and crash ratio versus training steps for different methods in the target-task environment. The compared methods include SRATRL, SAC-Lag \cite{yang2023safety}, PPO-Lag \cite{stooke2020responsive}, EPGO~\cite{peng2022safe}, and FT~\cite{julian2021never}. The results show that SRATRL achieves higher reward with lower safety cost and crash ratio.}
    \label{fig-comparative}
    \vspace{-5mm}
\end{figure*}
\begin{table*}[t]
\caption{\textsc{Comparative Study Results}}
\centering
\tiny
\renewcommand{\arraystretch}{1.2}
\resizebox{\textwidth}{!}{
\begin{tabular}{@{} l ccc ccc ccc ccc @{}}
\toprule
\multirow{2}{*}{Method} & \multicolumn{3}{c}{Average Reward} & \multicolumn{3}{c}{Average Cost} & \multicolumn{3}{c}{Crash Ratio} & \multicolumn{3}{c}{Average Velocity (m/s)} \\
\cmidrule(lr){2-4} \cmidrule(lr){5-7} \cmidrule(lr){8-10} \cmidrule(lr){11-13}
& Low & Medium & High & Low & Medium & High & Low & Medium & High & Low & Medium & High \\
\midrule
SRATRL     & \textbf{48.01} & \textbf{47.34} & \textbf{46.03} & \textbf{0.09} & \textbf{0.11} & \textbf{0.16} & \textbf{0.01} & \textbf{0.01} & \textbf{0.02} & \textbf{21.97} & \textbf{21.80} & \textbf{21.30} \\
SAC-Lag \cite{yang2023safety}      & 45.34 & 43.57 & 43.16 & 1.92 & 1.69 & 1.84 & 0.06 & 0.08 & 0.09 & 21.37 & 20.76 & 20.46 \\
PPO-Lag \cite{stooke2020responsive}  & 42.99 & 42.54 & 42.16 & 0.64 & 0.68 & 0.85 & 0.05 & 0.05 & 0.06 & 20.38 & 20.31 & 20.18 \\
EPGO \cite{peng2022safe} & 45.76  & 44.55  & 44.29 & 0.55  &  0.60 & 0.67 & \textbf{0.01}  & 0.03  & 0.04 & 21.13  & 20.66  & 20.80 \\
FT \cite{julian2021never} &  44.65 & 44.62  & 43.65 & 0.68  & 0.84  & 0.99 &  0.02 &  0.03 & 0.05 &  20.63 & 20.48  & 19.64 \\
\bottomrule
\end{tabular}
}
\vspace{-5mm}
\label{tab-comparative}
\end{table*}

\subsection{Comparative Study}

\subsubsection{Convergence Performance}

Fig.~\ref{fig-comparative} shows the training curves of average reward, average cost, and crash ratio for all compared methods. As shown in Fig.~\ref{fig-comparative}(a), the proposed method achieves rapid reward improvement in the early training stage and reaches the highest reward level after convergence. EPGO shows competitive early-stage performance due to expert guidance, but its final reward and safety metrics remain inferior, suggesting that action-level guidance alone may not be sufficient for target-domain adaptation. SAC-Lag gradually improves its reward with increasing environment interactions, while PPO-Lag converges to a lower reward level, reflecting a more conservative policy with limited traffic efficiency. FT may suffer from behavioral bias caused by source--target mismatch after direct policy initialization, resulting in less effective target-domain adaptation.

From the safety perspective, Fig.~\ref{fig-comparative}(b) shows that the proposed method quickly reduces the average cost and maintains the lowest cost level during most of the training process. In contrast, SAC-Lag exhibits a much higher average cost, suggesting that its reward improvement is accompanied by considerable safety risks. PPO-Lag reduces the safety cost compared with SAC-Lag, but it still shows a clear gap from the proposed method. Since both methods lack teacher guidance, they primarily rely on trial-and-error interactions to learn safe behaviors during target-domain training. The FT method also shows a relatively high average cost, suggesting that teacher policy initialization alone cannot effectively adapt the safety behavior learned in the source domain to the target task. EPGO achieves a lower cost through teacher intervention; however, its cost remains nearly unchanged, suggesting that action-level guidance alone provides limited further improvement in student safety. The crash-ratio curves in Fig.~\ref{fig-comparative}(c) further show that the proposed method maintains a low crash ratio during training, indicating reduced unsafe exploratory interactions and improved training-stage safety. Overall, the proposed method achieves higher reward while maintaining lower average cost and crash ratio, demonstrating improved training-stage safety and a better balance between policy performance and safety.

\subsubsection{Quantitative Evaluation}
Table~\ref{tab-comparative} reports the quantitative evaluation results under low-, medium-, and high-density traffic scenarios. The proposed method achieves the best overall performance across all densities, obtaining the highest average reward and average velocity while maintaining the lowest average cost and crash ratio. Compared with PPO-Lag, the proposed method improves the average reward and average velocity by 10.74\% and 6.90\%, respectively, while reducing the average cost and crash ratio by 83.33\% and 75.00\% on average across the three traffic densities. Although SAC-Lag obtains relatively high velocities in some cases, its higher safety cost and crash ratio indicate that its efficiency is achieved with a greater safety risk.
The proposed method consistently yields lower average cost and crash ratio than all baselines. For example, in the medium-density scenario, the average cost is reduced from 0.68 for PPO-Lag to 0.11, and the crash ratio is reduced from 0.05 to 0.01. These results show that the proposed method can effectively reduce risky interactions and collision events during lane-changing decision-making.
Overall, the proposed method achieves a more favorable balance between traffic efficiency and safety.

\subsection{Evaluation on NGSIM-Based Traffic Data}
We further validate the proposed method using the US Highway 101 (US-101) dataset from the Next Generation Simulation (NGSIM) open data. During evaluation, the recorded trajectories of surrounding vehicles are replayed in the simulation environment, while the ego vehicle is controlled by the evaluated policy.

\begin{table}[t]
\caption{\textsc{Performance Comparison on the NGSIM Dataset}}
\centering
\renewcommand{\arraystretch}{1.2}
\begin{tabular*}{\columnwidth}{@{\extracolsep{\fill}}lcccc}
\toprule
Method & \makecell{Average\\Reward} & \makecell{Average\\Cost} & \makecell{Crash\\Ratio} & \makecell{Average Velocity\\(m/s)} \\
\midrule
SRATRL    & \textbf{47.64} & \textbf{0.73} & \textbf{0.09} & \textbf{21.90} \\
SAC-Lag & 44.89 & 2.31 & 0.39 & 20.78 \\
PPO-Lag & 40.85 & 1.08 & 0.15 & 19.45 \\
EPGO & 44.36 & 0.96 & 0.11 & 19.97 \\
FT & 42.83 & 1.01 & 0.10 & 19.15 \\
\bottomrule
\end{tabular*}
\label{tab-NGSIM}
\vspace{-5mm}
\end{table}
As shown in Table \ref{tab-NGSIM}, from the perspective of traffic safety, the proposed method yields an average cost of 0.73 on the US-101 dataset, lower than the 1.08 achieved by PPO-Lag, while attaining a crash ratio of 0.09, which is also below PPO-Lag’s 0.15. In addition, the proposed method achieves an average reward of 47.64, significantly higher than PPO-Lag’s 40.85, and increases the average velocity from 19.45 m/s to 21.90 m/s, indicating that it can maintain higher traffic efficiency under replay traffic scenarios constructed from NGSIM-recorded vehicle states. These results demonstrate that, in NGSIM-based replayed traffic scenarios, the proposed method achieves superior traffic efficiency while maintaining a high level of safety, thereby further validating its effectiveness under dataset-derived traffic conditions.

\subsection{Ablation Study}
\begin{figure*}[t]
    \centering
    \includegraphics[width=1.0\linewidth]{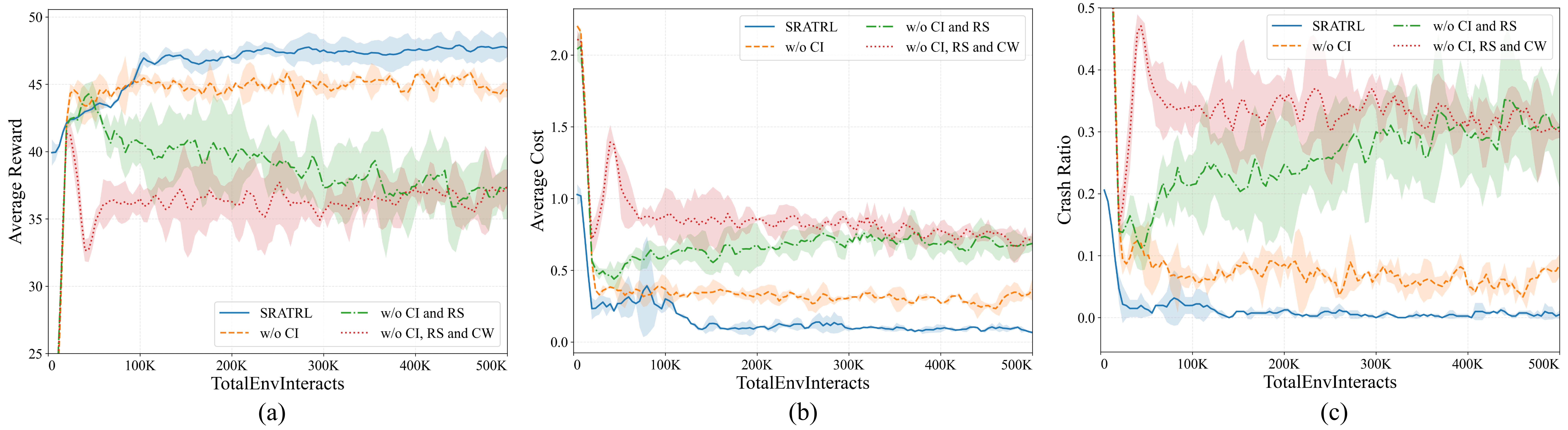}
    \vspace{-6mm}
    \caption{Ablation study. (a) Average reward, (b) Average cost, (c) Crash ratio.  Training curves of average reward, average cost, and crash ratio versus training steps under different ablation configurations in the target-task environment. The results show that the complete method achieves higher returns, lower costs, and a reduced crash ratio compared with all ablated variants, demonstrating the effectiveness of the proposed modules.}
    \label{fig-Ablation}
    \vspace{-2mm}
\end{figure*}

\begin{table*} 
\caption{\textsc{Ablation Study Results}}
\centering
\renewcommand{\arraystretch}{1.2} 
\begin{tabular}
{>{\centering\arraybackslash}p{0.75cm}>{\centering\arraybackslash}p{0.75cm}>{\centering\arraybackslash}p{0.75cm}> 
  {\centering\arraybackslash}p{0.9cm}>{\centering\arraybackslash}p{0.6cm}>{\centering\arraybackslash}p{0.9cm}>
  {\centering\arraybackslash}p{0.9cm}>{\centering\arraybackslash}p{0.6cm}>{\centering\arraybackslash}p{0.9cm}>
  {\centering\arraybackslash}p{0.7cm}>{\centering\arraybackslash}p{0.6cm}>{\centering\arraybackslash}p{0.7cm}>
  {\centering\arraybackslash}p{0.7cm}>{\centering\arraybackslash}p{0.6cm}>{\centering\arraybackslash}p{0.7cm}}
\bottomrule
\multirow{2}{*}{CI} & \multirow{2}{*}{RS} & \multirow{2}{*}{CW} & \multicolumn{3}{c}{Average Reward} & \multicolumn{3}{c}{Average Cost} & \multicolumn{3}{c}{Crash Ratio} & \multicolumn{3}{c}{Average Velocity (m/s)} \\ \cmidrule(lr){4-6} \cmidrule(lr){7-9} \cmidrule(lr){10-12} \cmidrule(lr){13-15} 
&            &            & \multicolumn{1}{c}{Low}   & \multicolumn{1}{c}{Medium} & High  & \multicolumn{1}{c}{Low}  & \multicolumn{1}{c}{Medium} & High & \multicolumn{1}{c}{Low}   & \multicolumn{1}{c}{Medium} & High  & \multicolumn{1}{c}{Low}  & \multicolumn{1}{c}{Medium} & High \\ \hline
 \checkmark & \checkmark & \checkmark & \multicolumn{1}{c}{\textbf{48.01}} & \multicolumn{1}{c}{\textbf{47.34}}  & \textbf{46.03} & \multicolumn{1}{c}{\textbf{0.09}} & \multicolumn{1}{c}{\textbf{0.11}}   & \textbf{0.16} & \multicolumn{1}{c}{\textbf{0.01}} & \multicolumn{1}{c}{\textbf{0.01}}  & \textbf{0.02} & \multicolumn{1}{c}{\textbf{21.97}} & \multicolumn{1}{c}{\textbf{21.80}}   & \textbf{21.30} \\ 
 -          & \checkmark & \checkmark & \multicolumn{1}{c}{47.33} & \multicolumn{1}{c}{46.99}  & 45.82 & \multicolumn{1}{c}{0.23} & \multicolumn{1}{c}{0.28}   & 0.38 & \multicolumn{1}{c}{0.07} & \multicolumn{1}{c}{0.06}  & 0.09 & \multicolumn{1}{c}{19.40} & \multicolumn{1}{c}{19.27}   & 19.77 \\ 
 -          & -          & \checkmark & \multicolumn{1}{c}{42.27} & \multicolumn{1}{c}{41.76}  & 38.87 & \multicolumn{1}{c}{0.44} & \multicolumn{1}{c}{0.44}   & 0.57 & \multicolumn{1}{c}{0.13} & \multicolumn{1}{c}{0.21}  & 0.24 & \multicolumn{1}{c}{19.97} & \multicolumn{1}{c}{19.37}   &19.63 \\ 
 -          & -          & -          & \multicolumn{1}{c}{34.75} & \multicolumn{1}{c}{37.30}  & 36.23 & \multicolumn{1}{c}{0.76} & \multicolumn{1}{c}{0.75}   &0.76 & \multicolumn{1}{c}{0.37} & \multicolumn{1}{c}{0.39}  &0.34 & \multicolumn{1}{c}{20.52} & \multicolumn{1}{c}{20.09}   &20.41 \\ 
\bottomrule
\end{tabular}
\par\vspace{1mm}
\begin{minipage}{\textwidth}
\footnotesize
\textit{Note:} CI denotes safety-triggered closed-loop intervention, 
RS denotes safety-adaptive teacher-guided reward shaping, and 
CW denotes teacher--student policy-compatibility weighting.
\end{minipage}
\label{tab-ablation}
\vspace{-0.5cm}
\end{table*}

To systematically evaluate the contribution of each module, we conduct an ablation study with four configurations: 
(i) the complete method, which includes safety-triggered closed-loop intervention (CI), safety-adaptive teacher-guided reward shaping (RS), and teacher--student policy-compatibility weighting (CW); 
(ii) w/o CI, which removes the adaptive closed-loop regulation in the CI module and replaces it with a fixed teacher-intervention setting; 
(iii) w/o CI and RS, which further removes the safety-adaptive teacher-guided reward shaping term from the value-target construction; 
and (iv) w/o CI, RS and CW,  which removes all the above proposed adaptive modules and retains only the basic fixed teacher-intervention setting.

Fig. \ref{fig-Ablation} (a)--(c) present the ablation results. The complete SRATRL achieves the highest average reward over most of the training process while maintaining the lowest average cost and crash ratio, indicating a better trade-off between efficiency and safety. Its consistently lower safety-related metrics further show that the proposed method can reduce unsafe exploratory behaviors during training. After removing CI, the reward slightly decreases, whereas the safety-related metrics degrade noticeably. Further removing RS leads to a continuous reduction in reward, accompanied by higher cost and crash ratio. The variant without CI, RS, and CW exhibits the weakest performance. 

Table \ref{tab-ablation} reports the ablation results of CI, RS, and CW. When all three modules are removed, the method achieves the lowest rewards and exhibits the highest average costs and crash ratios across different traffic densities. Introducing CW alone significantly improves the reward and reduces both the cost and crash ratio, indicating that policy-compatibility weighting contributes to sample-level optimization. With the further inclusion of RS, the rewards increase to 47.33, 46.99, and 45.82 under low-, medium-, and high-density traffic, respectively, while the crash ratios decrease to 0.07, 0.06, and 0.09. This suggests that teacher-guided reward shaping enhances safety-oriented value learning. The complete SRATRL achieves the best results across all three densities, attaining the highest rewards and average velocities, as well as the lowest costs and crash ratios. These results indicate that each component contributes to the final performance, and their integration effectively improves both policy performance and safety.

\begin{figure*}[t]
    \centering
    \includegraphics[width=1.0\linewidth]{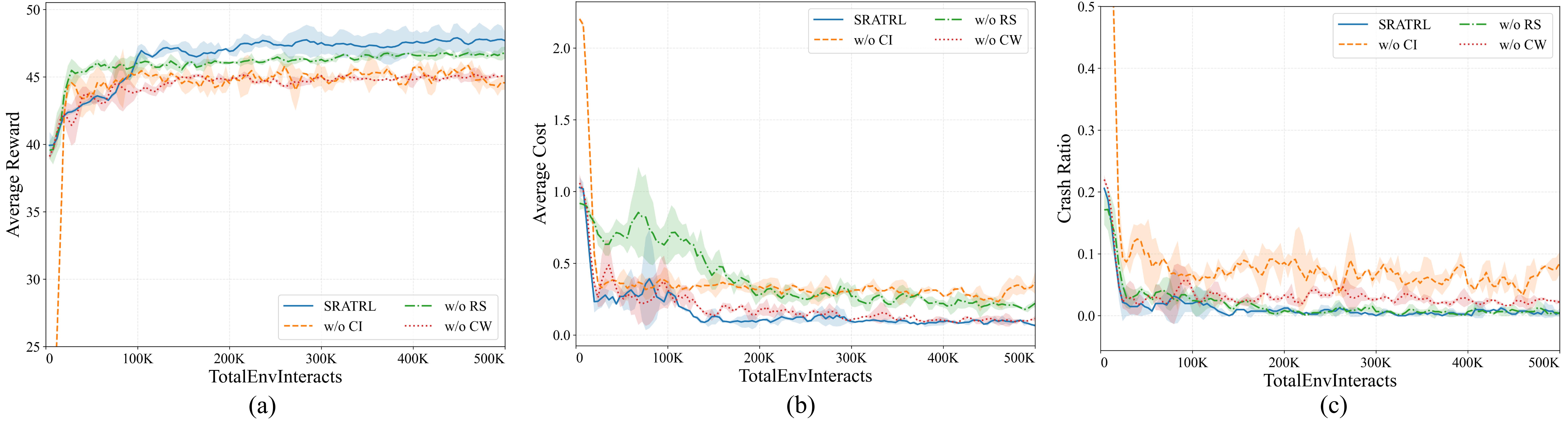}
    \vspace{-6mm}
    \caption{Effectiveness of the main components of SRATRL. The complete SRATRL is compared with three single-module variants, including w/o CI, w/o RS, and w/o CW. (a) Average reward, (b) Average cost, and (c) Crash ratio. The results show that removing any individual component leads to different degrees of performance degradation, while the complete SRATRL achieves higher reward and better safety-related performance.}
    \label{fig-Ablation2}
    \vspace{-1mm}
\end{figure*}
\begin{table*}[t]
\caption{\textsc{Quantitative Evaluation of Main Components}}
\centering
\tiny
\renewcommand{\arraystretch}{1.2}
\resizebox{\textwidth}{!}{
\begin{tabular}{@{} l ccc ccc ccc ccc @{}}
\toprule
\multirow{2}{*}{Method} & \multicolumn{3}{c}{Average Reward} & \multicolumn{3}{c}{Average Cost} & \multicolumn{3}{c}{Crash Ratio} & \multicolumn{3}{c}{Average Velocity (m/s)} \\
\cmidrule(lr){2-4} \cmidrule(lr){5-7} \cmidrule(lr){8-10} \cmidrule(lr){11-13}
& Low & Medium & High & Low & Medium & High & Low & Medium & High & Low & Medium & High \\
\midrule
SRATRL & \multicolumn{1}{c}{\textbf{48.01}} & \multicolumn{1}{c}{\textbf{47.34}}  & \textbf{46.03} & \multicolumn{1}{c}{\textbf{0.09}} & \multicolumn{1}{c}{\textbf{0.11}}   & \textbf{0.16} & \multicolumn{1}{c}{\textbf{0.01}} & \multicolumn{1}{c}{\textbf{0.01}}  & \textbf{0.02} & \multicolumn{1}{c}{\textbf{21.97}} & \multicolumn{1}{c}{\textbf{21.80}}   & \textbf{21.30} \\ 
w/o CI & \multicolumn{1}{c}{47.33} & \multicolumn{1}{c}{46.99}  & 45.82 & \multicolumn{1}{c}{0.23} & \multicolumn{1}{c}{0.28}   & 0.38 & \multicolumn{1}{c}{0.07} & \multicolumn{1}{c}{0.06}  & 0.09 & \multicolumn{1}{c}{19.40} & \multicolumn{1}{c}{19.27}   & 19.77 \\ 
w/o RS &  \multicolumn{1}{c}{47.65}  &  \multicolumn{1}{c}{47.23}  & 45.88  & \multicolumn{1}{c}{0.17}   & \multicolumn{1}{c}{0.19}   & 0.27  &  \multicolumn{1}{c}{0.02}  &  \multicolumn{1}{c}{0.02}  & \multicolumn{1}{c}{0.04}  &  \multicolumn{1}{c}{21.89}  &  \multicolumn{1}{c}{21.44}  &  20.61 \\
w/o CW & \multicolumn{1}{c}{47.42}   &  \multicolumn{1}{c}{47.04}  & 45.64  & \multicolumn{1}{c}{0.13}   & 0.14   & \multicolumn{1}{c}{\textbf{0.16}}  & 0.03   & 0.05   & 0.05  & 20.42   & 20.07   & 20.51  \\
\bottomrule
\end{tabular}
}
\vspace{-3mm}
\label{tab-component}
\end{table*}

\begin{table}[t]
\caption{\centering\textsc{Target-domain evaluation of teacher policy and student policy}}
\centering
\renewcommand{\arraystretch}{1.2}
\begin{tabular*}{\columnwidth}{@{\extracolsep{\fill}}ccccc}
\toprule
Policy & \makecell{Average\\Reward} & \makecell{Average\\Cost} & \makecell{Crash\\Ratio} & \makecell{Average Velocity\\(m/s)} \\
\midrule
Teacher    & 30.37 & 0.25 & \textbf{0.01} & 18.82 \\
Initial Student & 16.29 & 1.96 & 0.75 & 20.26 \\
Final Student & \textbf{47.34} & \textbf{0.11} & \textbf{0.01} & \textbf{21.80} \\
\bottomrule
\par\vspace{0.1mm}
\end{tabular*}
\par\vspace{-1mm}
\noindent
\begin{minipage}{\columnwidth}
\scriptsize
\emph{Note:} Initial Student denotes the randomly initialized student policy before target-domain training. Final Student denotes the learned student policy after training with the proposed teacher-guided safe transfer framework.
\end{minipage}
\label{tab-teacher}
\vspace{-2mm}
\end{table}

\subsection{Effectiveness of Main Components}
To further examine the effectiveness of the main components, this subsection provides both component-level comparisons and teacher-policy evaluation. 
For component-level analysis, the complete SRATRL is compared with three variants, namely w/o CI, w/o RS, and w/o CW. 
Unlike the progressive ablation study, only one component is removed each time, allowing the individual contribution of CI, RS, and CW to be more clearly observed, as shown in Fig. \ref{fig-Ablation2}. 
In addition, the standalone teacher policy is evaluated in the target environment and compared with the initial and final student policies, aiming to verify whether the source-domain teacher can serve as a useful behavioral prior during target-domain adaptation.

\subsubsection{Effectiveness of Safety-Triggered Closed-Loop Intervention}
The safety-triggered closed-loop intervention module is evaluated by comparing SRATRL with w/o CI, where the proposed intervention strategy is replaced by a fixed intervention setting. As shown in Fig. \ref{fig-Ablation2}, w/o CI achieves a lower average reward than SRATRL after convergence, while its average cost and crash ratio remain consistently higher. This indicates that adaptively regulating teacher intervention is beneficial for coordinating teacher guidance and student exploration during training. The proposed CI module introduces teacher intervention according to the safety condition of the current interaction and gradually reduces unnecessary teacher dependence. As a result, the student policy can benefit from teacher guidance in risky states while still retaining sufficient autonomy for policy improvement. Table \ref{tab-component} further confirms the safety benefit of CI. Under medium-density traffic, removing CI increases the average cost from 0.11 to 0.28 and the crash ratio from 0.01 to 0.06, while reducing the average reward from 47.34 to 46.99. Therefore, CI contributes to safer exploration and improves the final policy performance.

\subsubsection{Effectiveness of Safety-Adaptive Value Shaping}
The effectiveness of safety-adaptive value shaping is verified by comparing SRATRL with w/o RS, where the teacher-guided reward shaping term is removed from the value-target construction. From Fig. \ref{fig-Ablation2}, w/o RS can still learn a feasible policy, but its reward is lower than that of SRATRL in the later training stage. Meanwhile, w/o RS exhibits higher average cost during a considerable part of training, indicating that removing the teacher-guided shaping signal weakens the safety-oriented value guidance. By introducing the adaptive reward shaping term, SRATRL incorporates the teacher policy as a behavioral prior into value learning. This allows the critic to assign a higher value to actions that are more consistent with teacher guidance, especially during early and middle training. A similar effect can be observed in the quantitative evaluation. Without RS, the average cost under medium-density traffic rises from 0.11 to 0.19, accompanied by a slight decrease in average reward from 47.34 to 47.23. This result is consistent with the role of RS in providing safety-oriented value guidance. Consequently, RS helps improve both task reward and safety-related performance.
\begin{figure}[t]
    \centering
    \includegraphics[width=0.7\linewidth]{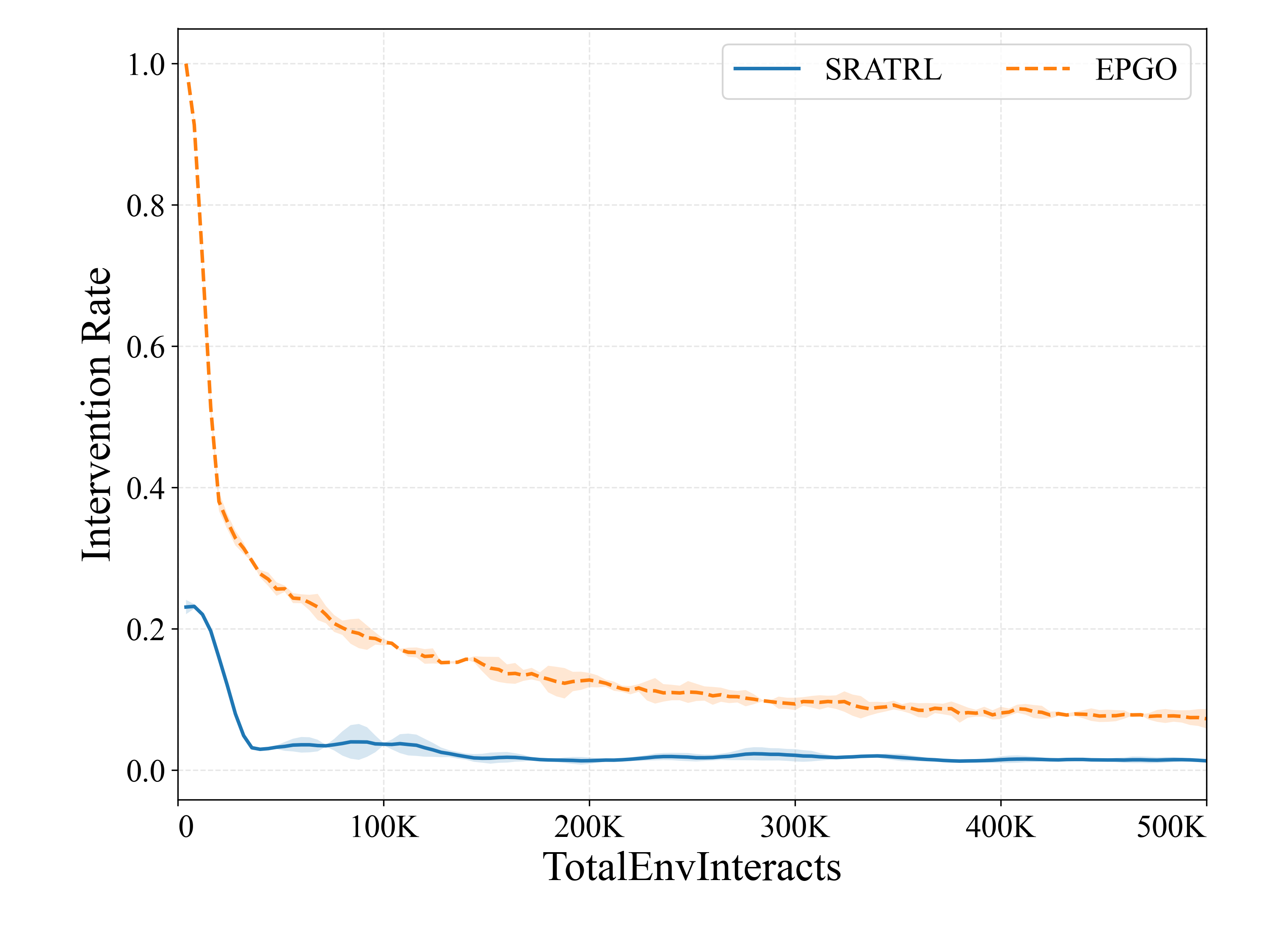}
    \vspace{-3mm}
    \captionsetup{justification=raggedright, singlelinecheck=false}
    \caption{Comparison of teacher intervention rates between SRATRL and EPGO during student training.}
    \label{fig-InterventionRate}
    \vspace{-2mm}
\end{figure}

\subsubsection{Effectiveness of Policy-Compatibility-Based Optimization Strategies}
The policy-compatibility-based optimization strategy is evaluated by comparing SRATRL with w/o CW, where the compatibility weighting mechanism is removed during optimization. As shown in Fig. \ref{fig-Ablation2}, w/o CW obtains a lower average reward than the complete method and shows slightly higher average cost and crash ratio. This suggests that directly using teacher-guided samples without considering the compatibility between the teacher and student policies may reduce the effectiveness of sample utilization. The proposed CW module assigns optimization weights according to the relative likelihood between the student and teacher policies, thereby emphasizing samples that are more compatible with the current student policy. This design helps reduce the influence of less compatible teacher-guided samples and improves the effectiveness of policy optimization. The results in Table \ref{tab-component} also show that removing CW reduces the average reward from 47.34 to 47.04 under medium-density traffic, while increasing the crash ratio from 0.01 to 0.05. This suggests that compatibility-based weighting improves the effectiveness of teacher-guided samples during policy optimization. 

\begin{figure*}[t]
    \centering
    \includegraphics[width=1.0\linewidth]{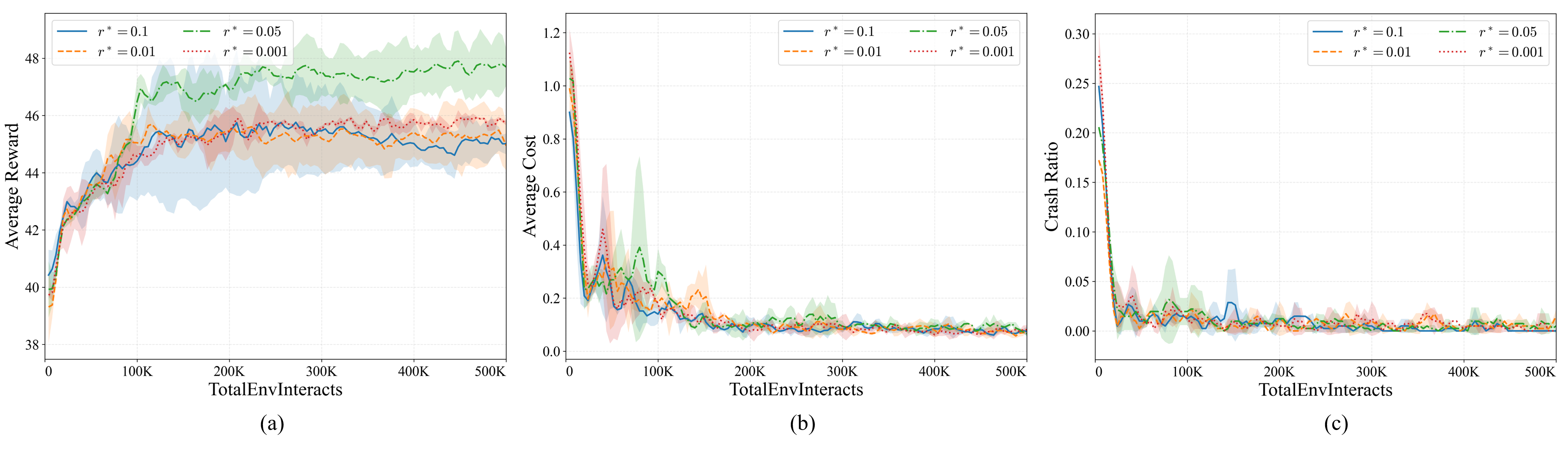}
    \vspace{-7mm}
    \caption{Sensitivity analysis of the reference intervention rate $r^{*}$. (a) Average reward, (b) Average cost. Training curves of average reward (left) and average cost (right) under different reference intervention-rate settings ($r^{*}\in\{0.1,0.05,0.01,0.001\}$), illustrating their impact on training behavior. }
    \label{fig-Sensitivity}
    \vspace{-3mm}
\end{figure*}

\subsubsection{Teacher Policy Evaluation and Intervention Analysis}
In addition to the component-level analysis, we further examine the standalone teacher policy in the target environment and present the evolution of the teacher intervention rate during student training. This evaluation is not intended to regard the teacher as a complete target-task learning baseline, but to verify whether the source-domain policy can provide a low-risk behavioral prior for student training.

As shown in Table~\ref{tab-teacher}, under the medium-density traffic setting, the teacher policy achieves substantially lower average cost and crash ratio than the untrained student policy, indicating that it can provide safety-oriented guidance during early target-domain exploration. Nevertheless, due to the source--target task mismatch, the standalone teacher policy is not necessarily optimal for the target task. Compared with the teacher policy, the final student policy obtains higher average reward and velocity while further reducing the average cost. This suggests that the student policy does not simply imitate the teacher, but improves its task performance through target-domain adaptation. Therefore, in the proposed framework, the teacher policy is used as a source-domain behavioral prior rather than a complete controller for the target task.

To further examine the reliance on teacher intervention during training, Fig.~\ref{fig-InterventionRate} compares the intervention rates of SRATRL and EPGO. Both methods gradually reduce teacher intervention as training proceeds, while SRATRL maintains a substantially lower intervention rate and decreases more rapidly during the early training stage. This indicates that the proposed closed-loop intervention mechanism can reduce the student’s reliance on teacher intervention while retaining teacher guidance during training. In contrast, EPGO relies on more frequent teacher intervention throughout training. These results further support the role of the teacher policy in SRATRL as a temporary source-domain behavioral prior rather than a persistent controller.

\subsection{Sensitivity Analysis}
To evaluate the impact of the reference intervention rate, we conduct a sensitivity analysis with 
$r^{*}\in\{0.1,0.05,0.01,0.001\}$. The corresponding average reward and average cost curves are shown in Fig.~\ref{fig-Sensitivity}. Overall, all settings effectively reduce the average cost during training and eventually converge to a low cost level. However, different reference intervention rates lead to different reward performance. Among the tested settings, $r^{*}=0.05$ achieves the highest average reward after convergence while maintaining a low average cost.  These results indicate that the proposed method exhibits stable performance across different reference intervention-rate settings, with $r^{*}=0.05$ providing the most favorable overall performance in the evaluated scenarios.

\begin{figure*}[t]
    \centering
    \includegraphics[width=1.0\linewidth]{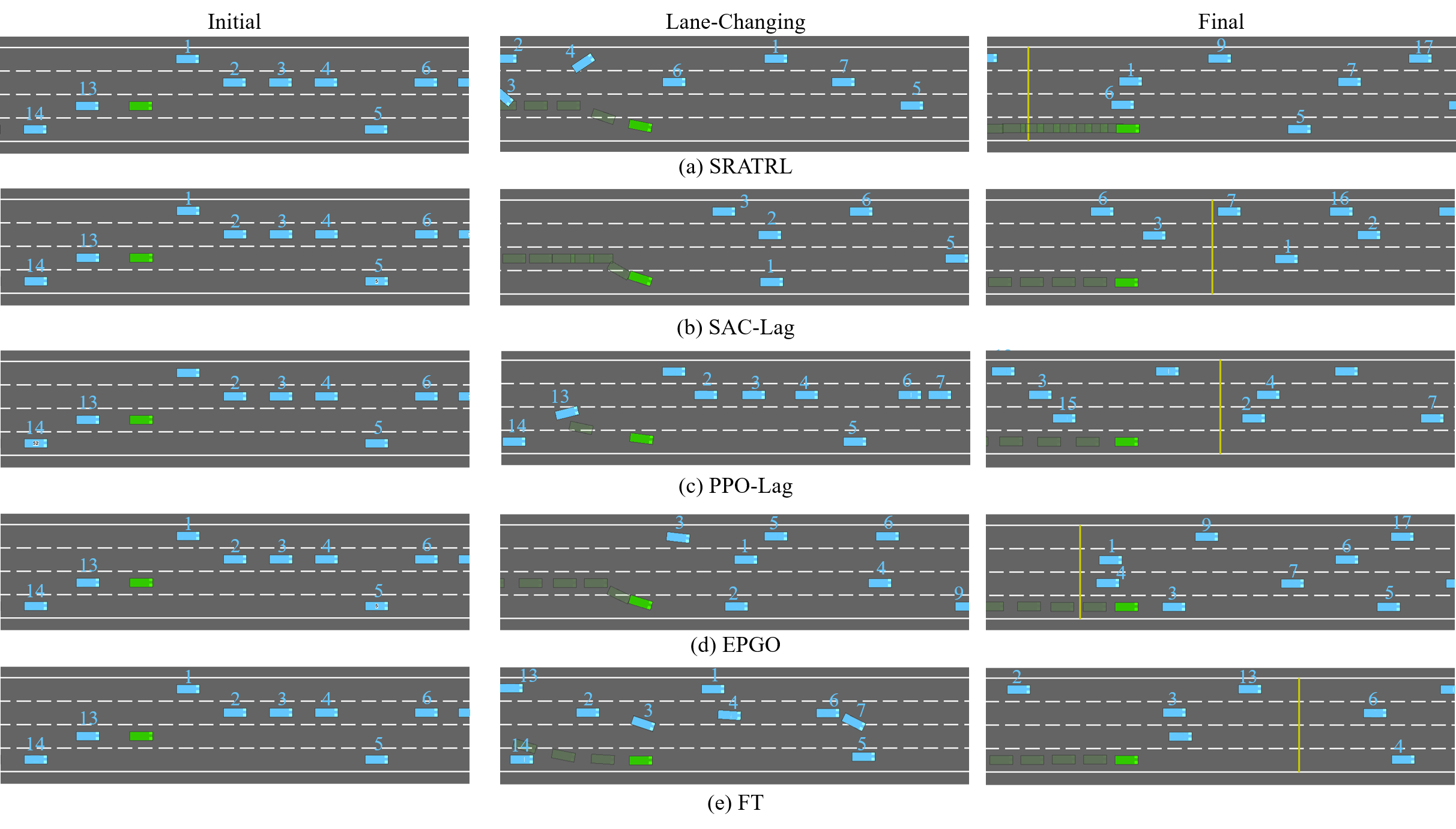}
    \vspace{-7mm}
    \caption{Visualization of representative lane-changing behaviors under different methods. For each method, the three snapshots correspond to the initial state, the lane-changing process, and the final state, while the yellow vertical line indicates a fixed longitudinal reference position for comparing driving progress.}
    \label{fig-Comparative-Visualization}
    \vspace{-2mm}
\end{figure*}
\subsection{Visualization Results}
To qualitatively compare the driving behaviors of different methods, Fig.~\ref{fig-Comparative-Visualization} visualizes representative lane-changing processes of the compared methods. For each method, the three snapshots correspond to the initial state, the lane-changing process, and the final state, respectively. The yellow vertical line marks a fixed longitudinal reference position from the starting point and is used to compare the longitudinal progress of different methods. As shown in Fig.~\ref{fig-Comparative-Visualization}, all methods exhibit different lane-changing trajectories and longitudinal progress. SRATRL completes the lane-changing maneuver smoothly and, at the final state, shows the greatest longitudinal progress, as indicated by the relative position of the fixed yellow reference line.  Overall, the visualization provides an intuitive comparison of the lane-changing behaviors and longitudinal driving efficiency of different methods, further illustrating the driving performance of SRATRL.

\vspace{-2mm}
\subsection{Discussion}
The proposed framework aims to balance safety and driving efficiency while improving safety throughout target-domain adaptation. The teacher policy is introduced as a source-domain behavioral prior to guide early-stage exploration, rather than as an oracle safety expert. Its guiding influence progressively decays as the student policy improves. The intervention strategy provides conservative guidance after high-cost events, which helps generate coherent teacher-guided trajectories during early training. Nevertheless, more fine-grained step-level recovery strategies may further improve the autonomy of the student policy. In addition, the current cost design provides a tractable interaction-risk measure for highway lane-changing scenarios. Future work will extend the safety formulation to include richer front-rear and lateral interaction risks, and investigate more flexible teacher-student action fusion mechanisms.

\vspace{-2mm}
\section{Conclusion}
\label{section9}
In this paper, we propose SRATRL, a safety-regulated adaptive transfer reinforcement learning framework for autonomous highway lane changing. 
The framework uses a source-domain teacher policy as a behavioral prior for target-domain adaptation and regulates its influence through three coordinated components. 
The safety-triggered closed-loop intervention module provides adaptive teacher guidance; the safety-adaptive teacher-guided reward shaping module introduces teacher guidance into value learning; and the teacher-student policy-compatibility weighting module adjusts the use of teacher-guided samples according to their compatibility with the student policy.
In addition, return-deviation bounds are provided to characterize the effect of teacher intervention on the mixed behavior policy. Experimental results show that the proposed method improves training-stage safety during student training in the target domain, achieves a favorable balance between traffic efficiency and safety, and outperforms the considered baseline methods in autonomous highway lane-changing tasks. Future work will extend the framework to more diverse traffic scenarios and investigate more flexible teacher-student action fusion mechanisms.

\bibliographystyle{IEEEtran}
\bibliography{IEEE.bib}

@ARTICLE{Ma2024TITS,
  author={Ma, Zhenyu and Liu, Xinyi and Huang, Yanjun},
  journal={IEEE Trans. Intell. Transp. Syst.	}, 
  title={Unsupervised Reinforcement Learning for Multi-Task Autonomous Driving: Expanding Skills and Cultivating Curiosity}, 
  year={2024},
  volume={25},
  number={10},
  pages={14209-14219},
  doi={10.1109/TITS.2024.3400224}}

@inproceedings{Krasowski2020SafeLaneChange,
  author    = {Hanna Krasowski and Xiao Wang and Matthias Althoff},
  title     = {Safe Reinforcement Learning for Autonomous Lane Changing Using Set-Based Prediction},
  booktitle = {IEEE Int. Conf. Intell. Transp. Syst. (ITSC)},
  pages     = {1--7},
  year      = {2020},
  doi       = {10.1109/ITSC45102.2020.9294259}
}

@inproceedings{Zhang2024PerspectiveO2O,
  author  = {Yinmin Zhang and Jie Liu and Chuming Li and Yazhe Niu and Yaodong Yang and Yu Liu and Wanli Ouyang},
  title   = {A Perspective of Q-value Estimation on Offline-to-Online Reinforcement Learning},
 booktitle={Proc. AAAI Conf. Artif. Intell.},
  volume={38},
  number={15},
  pages={16908--16916},
  year={2024}
}

@article{Hsu2023SimToLabToReal,
author = {Kai-Chieh Hsu and Allen Z. Ren and Duy Phuong Nguyen and Anirudha Majumdar and Jaime F. Fisac},
title = {Sim-to-Lab-to-Real: Safe Reinforcement Learning with Shielding and Generalization Guarantees},
journal = {Artif. Intell.},
volume = {314},
pages = {103811},
year = {2023}
}

@article{Zhou2025S2CD,
  author  = {Rongliang Zhou and Jiakun Huang and Mingjun Li and Hepeng Li and Haotian Cao and Xiaolin Song},
  title   = {Knowledge Transfer from Simple to Complex: A Safe and Efficient Reinforcement Learning Framework for Autonomous Driving Decision-Making},
  journal = {Adv. Eng. Inform.},
  year    = {2025},
  doi     = {10.1016/j.aei.2025.103188}
}

@article{Moni2025SSDT,
  author  = {R{\'o}bert Moni and B{\'a}lint Gyires-T{\'o}th},
  title   = {Self-supervised Domain Transfer for Reinforcement Learning-Based Autonomous Driving Agent},
  journal = {Expert Syst. Appl.},
  volume  = {284},
  pages   = {127809},
  year    = {2025},
  doi     = {10.1016/j.eswa.2025.127809}
}

@inproceedings{You2022CAT,
  author    = {Haoxiang You and Ruihai Dong and Yuejie Chi and Yan Zhu},
  title     = {Cross-domain Adaptive Transfer Reinforcement Learning based on State-Action Correspondence},
  booktitle = {Proc. Conf. Uncertainty Artif. Intell.},
  volume    = {180},
  pages     = {1640--1652},
  year      = {2022}
}

@article{Zhang2024SafetyControlTL,
  author  = {Quanqi Zhang and Chengwei Wu and Haoyu Tian and Yabin Gao and Weiran Yao and Ligang Wu},
  title   = {Safety Reinforcement Learning Control via Transfer Learning},
  journal = {Automatica},
  volume  = {166},
  pages   = {111714},
  year    = {2024},
  doi     = {10.1016/j.automatica.2024.111714}
}

@article{Liang2019FedTransferRL,
author = {Xinle Liang and Yang Liu and Tianjian Chen and Ming Liu and Qiang Yang},
title = {Federated Transfer Reinforcement Learning for Autonomous Driving},
journal = {arXiv preprint arXiv:1910.06001},
year = {2019},
doi = {10.48550/arXiv.1910.06001}
}

@article{Lu2024SceTL,
author = {Hongliang Lu and Chao Lu and Haoyang Wang and Jianwei Gong and Meixin Zhu and Hai Yang},
title = {Scenario-level knowledge transfer for motion planning of autonomous driving via successor representation},
journal = {Transp. Res. Pt. C-Emerg. Technol.},
volume = {168},
pages = {104899},
year = {2024},
doi = {10.1016/j.trc.2024.104899}
}

@ARTICLE{Shu2022TVT,
  author={Shu, Hong and Liu, Teng and Mu, Xingyu and Cao, Dongpu},
  journal={IEEE Trans. Veh. Technol.}, 
  title={Driving Tasks Transfer Using Deep Reinforcement Learning for Decision-Making of Autonomous Vehicles in Unsignalized Intersection}, 
  year={2022},
  volume={71},
  number={1},
  pages={41-52},
  doi={10.1109/TVT.2021.3121985}}

@article{Kiran2022DRLSurvey,
  author    = {B. Ravi Kiran and Ibrahim Sobh and Victor Talpaert and Patrick Mannion and Ahmad A. Al Sallab and Senthil Yogamani and Patrick P{\'e}rez},
  title     = {Deep Reinforcement Learning for Autonomous Driving: A Survey},
  journal   = {IEEE Trans. Intell. Transp. Syst. },
  volume    = {23},
  number    = {6},
  pages     = {4909--4926},
  year      = {2022},
  doi       = {10.1109/TITS.2021.3054625}
}

@inproceedings{Xu2018ZeroShotTransfer,
  author    = {Zhuo Xu and Chen Tang and Masayoshi Tomizuka},
  title     = {Zero-shot Deep Reinforcement Learning Driving Policy Transfer for Autonomous Vehicles based on Robust Control},
  booktitle = {Proc. 21st IEEE Intell. Transp. Syst. Conf. (ITSC)},
  pages     = {2865--2871},
  year      = {2018},
  organization = {IEEE},
  doi       = {10.1109/ITSC.2018.8569612}
}

@article{xue2023guarded,
  title   = {Guarded Policy Optimization with Imperfect Online Demonstrations},
  author  = {Zhenghai Xue and Zhenghao Peng and Quanyi Li and Zhihan Liu and Bolei Zhou},
  journal = {Proc. 11th Int. Conf. Learn. Represent. (ICLR)},
  year    = {2023},
}

@inproceedings{kakade2002approximately,
  title={Approximately optimal approximate reinforcement learning},
  author={Kakade, Sham and Langford, John},
  booktitle={Proc. 19th Int. Conf. Mach. Learn.},
  pages={267--274},
  year={2002}
}

@book{folland1999real,
  title={Real analysis: modern techniques and their applications},
  author={Folland, Gerald B},
  year={1999},
  publisher={John Wiley \& Sons}
}

@book{cover2006elements,
  title={Elements of Information Theory},
  author={Cover, Thomas M. and Thomas, Joy A.},
  edition={2},
  year={2006},
  publisher={Wiley}
}

@inproceedings{yang2023reinforcement,
  title={Reinforcement Learning by Guided Safe Exploration},
  author={Yang, Qisong and Sim{\~a}o, Thiago D and Jansen, Nils and Tindemans, Simon H and Spaan, Matthijs TJ},
  booktitle={Proc. 26th Eur. Conf. Artif. Intell.},
  pages={2858--2865},
  year={2023}
}

@inproceedings{Alshiekh2018Shielding,
  title={Safe reinforcement learning via shielding},
  author={Alshiekh, Mohammed and Bloem, Roderick and Ehlers, R{\"u}diger and K{\"o}nighofer, Bettina and Niekum, Scott and Topcu, Ufuk},
  booktitle={Proc. AAAI Conf. Artif. Intell.},
  volume={32},
  number={1},
  year={2018}
}

@inproceedings{Ilhan2021AdviceImitation,
author = {Ilhan, Ercument and Gow, Jeremy and Perez Liebana, Diego},
title = {Action Advising with Advice Imitation in Deep Reinforcement Learning},
year = {2021},
booktitle = {Proc. 20th Int. Conf. Auton. Agents Multiagent Syst.},
pages = {629–637},
numpages = {9},
}

@article{Edouard2018Highway,
  author = {Leurent, Edouard},
  title = {An Environment for Autonomous Driving Decision-Making},
  year = {2018},
  publisher = {GitHub},
  journal = {GitHub repository},
  howpublished = {\url{https://github.com/eleurent/highway-env}},
}

@INPROCEEDINGS{Polack2017IV,
  author={Polack, Philip and Altché, Florent and d'Andréa-Novel, Brigitte and de La Fortelle, Arnaud},
  booktitle={Proc. IEEE Intell. Veh. Symp. (IV 2017)}, 
  title={The kinematic bicycle model: A consistent model for planning feasible trajectories for autonomous vehicles?}, 
  year={2017},
  volume={},
  number={},
  pages={812-818},
  doi={10.1109/IVS.2017.7995816}}

@article{treiber2000congested,
  title={Congested traffic states in empirical observations and microscopic simulations},
  author={Treiber, Martin and Hennecke, Ansgar and Helbing, Dirk},
  journal={Phys. Rev. E},
  volume={62},
  number={2},
  pages={1805},
  year={2000}
}

@INPROCEEDINGS{2001Preferred,
  author={Ayres, T.J. and Li, L. and Schleuning, D. and Young, D.},
  booktitle={Proc. 4th IEEE Intell. Transp. Syst. Conf. (ITSC)}, 
  title={Preferred time-headway of highway drivers}, 
  year={2001},
  volume={},
  number={},
  pages={826-829},
}

@article{Vecerik2017LeveragingDemonstrations,
  title={Leveraging demonstrations for deep reinforcement learning on robotics problems with sparse rewards},
  author={Vecerik, Mel and Hester, Todd and Scholz, Jonathan and Wang, Fumin and Pietquin, Olivier and Piot, Bilal and Heess, Nicolas and Roth{\"o}rl, Thomas and Lampe, Thomas and Riedmiller, Martin},
  journal={arXiv preprint arXiv:1707.08817},
  year={2017},
  doi = {10.48550/arXiv.1707.08817}
}

@inproceedings{Kang2018POfD,
  title     = {Policy Optimization with Demonstrations},
  author    = {Kang, Bingyi and Jie, Zequn and Feng, Jiashi},
  booktitle = {Proc. 35th Int. Conf. Mach. Learn.},
  volume    = {80},
  pages     = {2474--2483},
  year      = {2018}
}

@article{Parisotto2016ActorMimic,
  title={Actor-mimic: Deep multitask and transfer reinforcement learning},
  author={Parisotto, Emilio and Ba, Jimmy Lei and Salakhutdinov, Ruslan},
  journal={arXiv preprint arXiv:1511.06342},
  year={2015},
  doi={10.48550/arXiv.1511.06342}
}

@inproceedings{Yin2017HierarchicalExperienceReplay,
  title={Knowledge transfer for deep reinforcement learning with hierarchical experience replay},
  author={Yin, Haiyan and Pan, Sinno},
  booktitle={Proc. AAAI Conf. Artif. Intell.},
  volume={31},
  number={1},
  year={2017}
}

@inproceedings{Wang2017ConfidenceBasedDemonstrations,
author = {Wang, Zhaodong and Taylor, Matthew E.},
title = {Improving reinforcement learning with confidence-based demonstrations},
year = {2017},
isbn = {9780999241103},
booktitle = {Proc. 26th Int. Joint Conf. Artif. Intell.},
pages = {3027–3033},
numpages = {7},
location = {Melbourne, Australia},
}

@inproceedings{Anand2021StateCategorization,
  title={An enhanced advising model in teacher-student framework using state categorization},
  author={Anand, Daksh and Gupta, Vaibhav and Paruchuri, Praveen and Ravindran, Balaraman},
  booktitle={Proc. AAAI Conf. Artif. Intell.},
  volume={35},
  number={8},
  pages={6653--6660},
  year={2021}
}

@inproceedings{Guo2023CRSfD,
  title={Reinforcement learning with demonstrations from mismatched task under sparse reward},
  author={Guo, Yanjiang and Gao, Jingyue and Wu, Zheng and Shi, Chengming and Chen, Jianyu},
  booktitle={Proc. Conf. Robot Learn.},
  pages={1146--1156},
  year={2023}
}

@inproceedings{Filos2021SuccessorDemo,
  title={Psiphi-learning: Reinforcement learning with demonstrations using successor features and inverse temporal difference learning},
  author={Filos, Angelos and Lyle, Clare and Gal, Yarin and Levine, Sergey and Jaques, Natasha and Farquhar, Gregory},
  booktitle={Proc. 35th Int. Conf. Mach. Learn.},
  pages={3305--3317},
  year={2021}
}

@inproceedings{stooke2020responsive,
  title={Responsive safety in reinforcement learning by pid lagrangian methods},
  author={Stooke, Adam and Achiam, Joshua and Abbeel, Pieter},
  booktitle={Proc. 37th Int. Conf. Mach. Learn.},
  pages={9133--9143},
  year={2020}
}

@article{borkar2005actor,
  title={An actor-critic algorithm for constrained Markov decision processes},
  author={Borkar, Vivek S},
  journal={Syst. Control Lett.},
  volume={54},
  number={3},
  pages={207--213},
  year={2005}
}

@article{Huang2024HAIMDRL,
  author  = {Huang, Zilin and Sheng, Zihao and Ma, Chengyuan and Chen, Sikai},
  title   = {Human as AI Mentor: Enhanced Human-in-the-Loop Reinforcement Learning for Safe and Efficient Autonomous Driving},
  journal = {Commun. Transp. Res.},
  year    = {2024},
  pages   = {100127},
  doi     = {10.1016/j.commtr.2024.100127}
}

@inproceedings{peng2022safe,
  title={Safe driving via expert guided policy optimization},
  author={Peng, Zhenghao and Li, Quanyi and Liu, Chunxiao and Zhou, Bolei},
  booktitle={Proc. Conf. Robot Learn.},
  pages={1554--1563},
  year={2022},
  organization={PMLR}
}

@inproceedings{julian2021never,
  title={Never Stop Learning: The Effectiveness of Fine-Tuning in Robotic Reinforcement Learning},
  author={Julian, Ryan and Swanson, Benjamin and Sukhatme, Gaurav and Levine, Sergey and Finn, Chelsea and Hausman, Karol},
  booktitle={Proc. Conf. Robot Learn.},
  pages={2120--2136},
  year={2021},
  organization={PMLR}
}

@inproceedings{Mueller2018DrivingPolicyTransfer,
  title={Driving Policy Transfer via Modularity and Abstraction},
  author={Mueller, Matthias and Dosovitskiy, Alexey and Ghanem, Bernard and Koltun, Vladlen},
  booktitle={Proc. Conf. Robot Learn.},
  pages={1--15},
  year={2018},
  organization={PMLR}
}

@inproceedings{Xu2018ZeroShotDrivingTransfer,
  title={Zero-shot deep reinforcement learning driving policy transfer for autonomous vehicles based on robust control},
  author={Xu, Zhuo and Tang, Chen and Tomizuka, Masayoshi},
  booktitle={Proc. 21st IEEE Int. Conf. Intell. Transp. Syst. (ITSC)},
  pages={2865--2871},
  year={2018},
  organization={IEEE}
}

@inproceedings{Campbell2023IntrospectiveActionAdvising,
  title={Introspective action advising for interpretable transfer learning},
  author={Campbell, Joseph and Guo, Yue and Xie, Fiona and Stepputtis, Simon and Sycara, Katia},
  booktitle={Proc. Conf. Lifelong Learn. Agents (CoLLAs)},
  pages={1072--1090},
  year={2023},
  organization={PMLR}
}

@inproceedings{Achiam2017CPO,
  title={Constrained policy optimization},
  author={Achiam, Joshua and Held, David and Tamar, Aviv and Abbeel, Pieter},
  booktitle={Proc. 34th Int. Conf. Mach. Learn.},
  pages={22--31},
  year={2017},
  organization={Pmlr}
}

@article{Chow2015CVaR,
  title={Risk-sensitive and robust decision-making: a cvar optimization approach},
  author={Chow, Yinlam and Tamar, Aviv and Mannor, Shie and Pavone, Marco},
  journal={Adv. Neural Inform. process. syst.},
  volume={28},
  year={2015}
}

@article{Dalal2018SafeExploration,
  title={Safe exploration in continuous action spaces},
  author={Dalal, Gal and Dvijotham, Krishnamurthy and Vecerik, Matej and Hester, Todd and Paduraru, Cosmin and Tassa, Yuval},
  journal={arXiv preprint arXiv:1801.08757},
  year={2018}
}

@article{Thananjeyan2021RecoveryRL,
  title={Recovery rl: Safe reinforcement learning with learned recovery zones},
  author={Thananjeyan, Brijen and Balakrishna, Ashwin and Nair, Suraj and Luo, Michael and Srinivasan, Krishnan and Hwang, Minho and Gonzalez, Joseph E and Ibarz, Julian and Finn, Chelsea and Goldberg, Ken},
  journal={IEEE Rob. Autom. Lett.},
  volume={6},
  number={3},
  pages={4915--4922},
  year={2021},
  publisher={IEEE}
}

@inproceedings{Gupta2019AdviceReplay,
  title={Advice replay approach for richer knowledge transfer in teacher student framework},
  author={Gupta, Vaibhav and Anand, Daksh and Paruchuri, Praveen and Ravindran, Balaraman},
  booktitle={Proc. 18th Int. Conf. Auton. Agents Multiagent Syst. (AAMAS)},
  pages={1997--1999},
  year={2019}
}

@article{yang2023safety,
  title={Safety-constrained reinforcement learning with a distributional safety critic},
  author={Yang, Qisong and Sim{\~a}o, Thiago D and Tindemans, Simon H and Spaan, Matthijs TJ},
  journal={Mach. Learn.},
  volume={112},
  number={3},
  pages={859--887},
  year={2023},
  publisher={Springer}
}

@inproceedings{yang2021efficient,
  title={Efficient deep reinforcement learning via adaptive policy transfer},
  author={Yang, Tianpei and Hao, Jianye and Meng, Zhaopeng and Zhang, Zongzhang and Hu, Yujing and Chen, Yingfeng and Fan, Changjie and Wang, Weixun and Liu, Wulong and Wang, Zhaodong and others},
  booktitle={Proc. 29th Int. Joint Conf. Artif. Intell. (IJCAI)},
  pages={3094--3100},
  year={2021}
}

@article{li2026risk,
  title={Risk-Constrained On-Ramp Merging via Safety-Augmented Reinforcement Learning and Model Predictive Control},
  author={Li, Yang and Li, Jian and Huang, Wenjie and Yang, Qisong and Qin, Hongmao and Jiang, Xiaolong and Bian, Yougang and Hu, Manjiang and Hu, Yingbai},
  journal={IEEE Internet Things J.},
  year={2026},
  publisher={IEEE}
}

@Article{machines14060605,
AUTHOR = {Teng, Jingjia and Huang, Wenjie and Yuan, Shijie and Hu, Manjiang and Qin, Hongmao and Li, Yang and Bian, Yougang and Li, Bai},
TITLE = {Adaptive Constraint Regulation for Human Preference-Aware Safe Reinforcement Learning of On-Ramp Merging},
JOURNAL = {Machines},
VOLUME = {14},
YEAR = {2026},
NUMBER = {6},
ARTICLE-NUMBER = {605},
ISSN = {2075-1702}
}

\end{document}